\DeclareMathAlphabet{\mathsf}{OT1}{cmss}{m}{n}
\SetMathAlphabet{\mathsf}{bold}{OT1}{cmss}{bx}{n}
\newcommand{\rk}{\text{rank}}
\newcommand{\Fr}{\text{F}}
\newcommand{\RN}{\text{RN}}
\newcommand{\Jac}{\text{Jac}}
\newcommand{\Net}{\text{Net}}
\newcommand{\removed}[1]{}
\def\[#1\]{{\small$#1$}}
\newcommand{\note}[1]{{\color{blue}{#1}}}
\begin{document}

\title{\LARGE \bf On Tighter Generalization Bounds for Deep Neural Networks: CNNs, ResNets, and Beyond}

\author{\normalsize Xingguo Li, Junwei Lu, Zhaoran Wang, Jarvis Haupt, and Tuo Zhao\thanks{Xingguo Li is affiliated with the Computer Science Department at Princeton University, Princeton, NJ 08540; Junwei Lu is affiliated with Department of Operations Research and Financial Engineering at Princeton University, Princeton, NJ 08544; Zhaoran Wang is affiliated with Department of Industrial Engineering and Management Sciences at Northwestern University, Evanston, IL 60208; Jarvis Haupt is affiliated with Department of Electrical and Computer Engineering at University of Minnesota, Minneapolis, MN 55455; Tuo Zhao is affiliated with School of Industrial and Systems Engineering at Georgia Institute of Technology, Atlanta, GA 30332; Tuo Zhao is the corresponding author; Email: {\tt xingguol@cs.princeton.edu},{\tt tourzhao@gatech.edu}}}

\date{}

\maketitle

\begin{abstract}
	We establish a margin based data dependent generalization error bound for a general family of deep neural networks in terms of the depth and width, as well as the Jacobian of the networks. Through introducing a new characterization of the Lipschitz properties of neural network family, we achieve significantly tighter generalization bounds than existing results. Moreover, we show that the generalization bound can be further improved for bounded losses. Aside from the general feedforward deep neural networks, our results can be applied to derive new bounds for popular architectures, including convolutional neural networks (CNNs) and residual networks (ResNets).  When achieving same generalization errors with previous arts, our bounds allow for the choice of larger parameter spaces of weight matrices, inducing potentially stronger expressive ability for neural networks. Numerical evaluation is also provided to support our theory.
\end{abstract}

\section{Introduction}

We aim to provide a theoretical justification for the enormous success of deep neural networks (DNNs) in real world applications \cite{he2016deep,collobert2011natural,goodfellow2016deep}. 
In particular, our paper focuses on the generalization performance of a general class of DNNs. 
The generalization bound is a powerful tool to characterize the predictive performance of a class of learning models for unseen data.  Early studies investigate the generalization ability of  shallow neural networks with no more than one hidden layer \cite{bartlett1998sample,anthony2009neural}. More recently, studies on the generalization bounds  of  deep neural networks have received increasing attention \cite{dinh2017sharp,bartlett2017spectrally,golowich2017size,neyshabur2015norm,neyshabur2017pac}. There are two major questions of our interest in these analysis of the generalization bounds:
\begin{itemize}[leftmargin=0.2in,itemsep=0.0in]
	\item ({\it Q1}) {\it \textbf{Can we establish tighter generalization error bounds for deep neural networks in terms of the network dimensions and structure of the weight matrices?}}
	\item ({\it Q2}) {\it \textbf{Can we develop generalization bounds for neural networks with special architectures?}}
\end{itemize}
%


For (Q1), \cite{neyshabur2015norm,bartlett2017spectrally,neyshabur2017pac,golowich2017size} have established results that characterize the generalization bounds in terms of the depth $D$ and width $p$ of networks and norms of rank-$r$ weight matrices. For example, \cite{neyshabur2015norm} provide an exponential bound on $D$ based on $\nbr{W_{d}}_{\Fr}$ (Frobenius norm), where $W_d$ is the weight matrix of $d$-th layer; \cite{bartlett2017spectrally,neyshabur2017pac} provide a polynomial bound on $p$ and $D$ based on $\nbr{W_{d}}_2$ (spectral norm) and $\nbr{W_d}_{2,1}$ (sum of the Euclidean norms for all rows of $W_{d}$). \cite{golowich2017size} provide a nearly size independent bound based on $\nbr{W_{d}}_{\Fr}$. Nevertheless, the generalization bound that depends on the product of norms may be too loose, especially those on those other than the spectral norm. For example, $\nbr{W_{d}}_{\Fr}$ ($\nbr{W_d}_{2,1}$) is in general $\sqrt{r}$ ($r$) times larger than $\nbr{W_d}_{2}$. Given $m$ training data points, \cite{bartlett2017spectrally} and \cite{neyshabur2017pac} demonstrate generalization error bounds as $\tilde{\cO} (\prod_{d} \nbr{W_{d}}_2 \sqrt{D^3 p r/m})$, and \cite{golowich2017size} achieve a bound $\tilde{\cO}(\prod_{d} \nbr{W_{d}}_{\Fr} \min(m^{-1/4}, \sqrt{D/m}))$, where $\tilde{\cO}(\cdot)$ represents the rate by ignoring logarithmic factors. In comparison, we show a tighter margin based bageneralization error bound as $\tilde{\cO}(\nbr{\text{Jacobian}}_2 \sqrt{Dpr/m})$, which is significantly smaller than existing results based on the product of norms.  Our bound is achieved based on a new Lipschitz analysis for DNNs in terms of both the input and weight matrices. Moreover, numerical results also support that our derived bound is significantly tighter than the existing bounds. Some recent result achieved results that is free of the linear dependence on the weight matrix norms by considering networks with bounded outputs \cite{zhou2018understanding}. We can achieve similar results using bounded loss functions as discussed in Section~\ref{sec:indep}. 

We notice that some recent results characterize the generalization bound in more structured ways, e.g., by considering specific error-resilience parameters \cite{arora2018stronger}, which can achieve empirically improved generalization bounds than existing ones based on the norms of weight matrices. However, it is not clear how the weight matrices explicitly control these parameters, which makes the results less interpretable. More recently, localized capacity analysis is conducted by considering the parameter space that (stochastic) gradient descent converges to \cite{allen2018learning,cao2019generalization}. However, they study the over-parameterization regime with extremely wide networks, which is not required in our analysis, thus their results are not directly comparable here. We summarize the result of norm based generalization bounds and our results in Table~\ref{table:compare}, as well as the results when $\nbr{W_{d}}_2=1$ for more explicit comparison in terms of the network sizes (i.e, depth and width). Further numerical comparison is provided in Section~\ref{sec:exp_all}.

For (Q2), we consider two widely used architectures: convolutional neural networks (CNNs) \cite{krizhevsky2012imagenet} and residual networks (ResNets) \cite{he2016deep} to demonsrate. By taking their structures of weight matrices into consideration, we provide tight characterization of their resulting capacities. In particular, we consider orthogonal filters and normalized weight matrices, which show good performance in both optimization and generalization \cite{mishkin2015all,xie2017all}. This is closely related with normalization frameworks, e.g., batch normalization \cite{ioffe2015batch} and layer normalization \cite{ba2016layer}, which have achieved great empirical performance \cite{liu2017sphereface,he2016deep}. Take CNNs as an example. By incorporating the orthogonal structure of convolutional filters, we achieve \[\tilde{\cO}\big({  \rbr{\frac{k}{s}}^{\frac{D}{2}}\sqrt{D k^2} }/{\sqrt{m}} \big)\], while \cite{bartlett2017spectrally,neyshabur2017pac} achieve \[\widetilde{\cO}\big({  \rbr{\frac{k}{s}}^{\frac{D-1}{2}} \sqrt{D^3 p^2} }/{\sqrt{m}} \big)\] and \cite{golowich2017size} achieve \[\widetilde{\cO}\Big( {p^{\frac{D}{2}}} \min\Big\{\frac{1}{\sqrt[4]{m}}, \sqrt{\frac{D}{m}} \Big\} \Big)\] ($\rk(W_d)=p$ in CNNs), where $k$ is the filter size that satisfies $k \ll p$ and $s$ is stride size that is usually of the same order with $k$; see Section~\ref{sec:cnn} for details. Here we achieve stronger results in terms of both depth $D$ and width $p$ for CNNs, where our bound only depend on $k$ rather than $p$. Analogous improvement is also attained for ResNets. In addition, we consider some widely used operations for width expansion and reduction, e.g., padding and pooling, and show that they do not increase the generalization bound. 

\begin{table*}[!t]
	\begin{center}
		\caption{Comparison of existing results with ours on norm based capacity bounds for DNNs. For ease of illustration, we suppose the upper bound of input norm $R$ is a generic constant. We use $B_{d,2}$, $B_{d,\Fr}$, and $B_{d,2\rightarrow 1}$ as the upper bounds of $\nbr{W_{d}}_2$, $\nbr{W_{d}}_{\Fr}$, and $\nbr{W_{d}}_{2,1}$ respectively. For notational convenience, we denote $\Gamma \leq \prod_{d=1}^D \nbr{W_d}_2$, $g_{\gamma}$ defined in \eqref{eqn_dnn:ramp}, $B^{\Jac}_{1:D}$ and $B^{\Jac}_{\backslash d,2}$ defined in Theorem~\ref{thm:tighter_upperbd}, and suppose the width $p_{d} = p$ for all layers $d = 1,\ldots,D$. We further show the results when $\nbr{W_{d}}_2=1$ for all $d=1,\ldots,D$, where $\nbr{W_{d}}_{\Fr} = \Theta(\sqrt{r})$ and $\nbr{W_{d}}_{2,1} = \Theta(r)$ in generic scenarios.
		}
		{
			\renewcommand{\arraystretch}{1.5}
			\begin{tabular}{c|c|c}
				\Xhline{1 pt}
				Capacity Bound & Original Results & $\nbr{W_{d}}_2=1$  \\
				\hline
				\cite{neyshabur2015norm} & $\cO\rbr{\frac{ 2^D \cdot \Pi_{d=1}^{D} B_{d,\Fr} }{\gamma \sqrt{m}}}$ & $\cO\rbr{\frac{ 2^D \cdot  r^{{D}/{2}} }{\gamma \sqrt{m}}}$ \\
				\hline
				\cite{bartlett2017spectrally} & ${\cO}\rbr{\frac{ \Pi_{d=1}^{D} B_{d,2} \cdot \log \rbr{p} }{\gamma \sqrt{m}} \rbr{\sum_{d=1}^{D} \frac{B_{d,2\rightarrow 1}^{{2}/{3}}}{B_{d,2}^{{2}/{3}}} }^{{3}/{2}}}$ & $\widetilde{\cO}\rbr{\frac{  \sqrt{D^3 pr} }{\gamma \sqrt{m}}}$ \\
				\hline
				\cite{neyshabur2017pac} & ${\cO}\rbr{\frac{ \Pi_{d=1}^{D} B_{d,2} \cdot \log \rbr{Dp} }{\gamma \sqrt{m}} \sqrt{D^2 p \sum_{d=1}^{D} \frac{B_{d,\Fr}^{2}}{B_{d,2}^{2} } } }$ & $\widetilde{\cO}\rbr{\frac{  \sqrt{D^3 pr} }{\gamma \sqrt{m}}}$ \\
				\hline
				\cite{golowich2017size} & ${\cO}\rbr{\frac{\Pi_{d=1}^{D} B_{d,\Fr}}{\gamma } \cdot \min \Bigg\{\frac{ \sqrt{ \log \frac{ \Pi_{d=1}^{D} B_{d,\Fr} }{ \Gamma } } }{\sqrt[4]{m}}, \sqrt{\frac{D}{m}} \Bigg\} }$ & $\widetilde{\cO}\rbr{ \frac{\sqrt{r^{D} \cdot D}}{\gamma \sqrt[4]{m}} }$ \removed{$\widetilde{\cO}\rbr{ \frac{r^{{D}/{2}}}{\gamma} \min\cbr{\frac{1}{\sqrt[4]{m}}, \sqrt{\frac{D}{m}}} }$} \\
				\hline
				Our results & \note{${\cO} \Bigg( \frac{ B_{1:D}^{\Jac} \sqrt{D pr} \cdot \log \rbr{\frac{B^{\Jac}_{\backslash d,2} \cdot \sqrt{Dm/r} \cdot \max_{d} B_{d,2}}{\gamma \cdot \sup g_{\gamma} \rbr{f\rbr{\cW_{D},x}}} } }{\gamma \sqrt{m}} \Bigg)$} & \note{$\tilde{\cO}\rbr{\frac{\sqrt{D pr} }{\gamma \sqrt{m}}}$} \\
				\Xhline{1 pt}
			\end{tabular}\label{table:compare}
		}
	\end{center}
\end{table*}

Our tighter bounds result in potentially stronger expressive power, hence higher training and testing accuracy for the DNNs. In particular, when achieving the same order of generalization errors, we allow the choice of a larger parameter space with deeper/wider networks and larger matrix spectral norms. We further show numerically that a larger parameter space can lead to better empirical performance. Quantitative analysis for the expressive power of DNNs is of great interest on its own, which includes (but not limited to) studying how well DNNs can approximate general class of functions and distributions \cite{cybenko1989approximation,hornik1989multilayer,funahashi1989approximate,barron1993universal,barron1994approximation, lee2017ability,petersen2017optimal,hanin2017approximating}, and quantifying the computation hardness of learning neural networks; see e.g., \cite{shamir2016distribution,eldan2016power,song2017complexity}. We defer our investigation  to future efforts.

\noindent {\bf Notation.} Given an integer $n >0$, we define $[n] = \cbr{1,\ldots,n}$. Given a matrix $A \in \RR^{n \times m}$, we denote $\nbr{A}$ as a generic norm, $\nbr{A}_2$ as the spectral norm, $\nbr{A}_{\Fr}$ as the Frobenius norm, and $\nbr{A}_{2,1} = \sum_{i=1}^{n} \nbr{A_{i*}}_2$. We write $\cbr{a_i}_{i=1}^{n} = \cbr{a_1,\ldots,a_n}$ as a set containing a sequence of size $n$. Given two real values $a,b \in \RR^+$, we write $a \lesssim(\gtrsim) b$ if $a \leq(\geq) c b$ for some generic constant $c>0$. 
We use $\cO\rbr{\cdot}$, $\Theta\rbr{\cdot}$, and $\Omega\rbr{\cdot}$ to denote limiting behaviors ignoring constants, and $\tilde{\cO}\rbr{\cdot}$, $\tilde{\Theta}\rbr{\cdot}$ and $\tilde{\Omega}\rbr{\cdot}$ to further ignore logarithms. 

\section{Preliminaries}\label{sec:background}

We provide a brief description of the DNNs. Given an input $x \in \RR^{p_0}$, the output of a $D$-layer network is defined as $f\rbr{\cW_{D},x} = f_{W_{D}} \rbr{\cdots f_{W_{1}}\rbr{x} } \in  \RR^{p_D}$, where $f_{W_{d}}(y) = \sigma_d \rbr{W_{d} \cdot y} : \RR^{p_{d-1}} \rightarrow \RR^{p_{d}}$ with an entry-wise activation function $\sigma_d(\cdot)$. We specify $\sigma_d$ as the rectified linear unit (ReLU) activation \cite{nair2010rectified}. The extension to more general activations, e.g., Lipschitz continuous functions, is straightforward. 
We also introduce some additional notations. Given any two layers $i,j \in [D]$ and input $x$, we denote $J_{i:j}^x$ as the Jacobian from layer $i$ to layer $j$, i.e., $f_{W_{j}} \rbr{\cdots f_{W_{i}}\rbr{x} } = J_{i:j}^x \cdot x$. For convenience, we denote $f_{W_{i}}\rbr{x} = J_{i,i}^x \cdot x$ when $i=j$ and denote $J_{i:j}^x=I$ when $i>j$. 

Then we denote DNNs with bounded Jocobian with weight matrices $\cW_{D} = \cbr{W_d}_{d=1}^D$ and ranks as
\begin{align}
	\hspace{-0.05in}\cF_{D,\Jac} = \big\{f\rbr{\cW_{D},x}~|~ \forall d \in [D],W_d \in \cW_{D}, \rk\rbr{W_d} \leq r_d, \text{sup}_{\cW_{D}} \nbr{J_{i,j}^x}_2 \leq B^{\Jac,x}_{i:j} \big\}, \hspace{-0.05in} \label{eqn_dnn:nn_Dl}
\end{align}
where $x \in \RR^{p_0}$ is an input, and $\{B^{\Jac,x}_{i:j} \}$ are real positive constants. 
For convenience, we also denote $\nbr{W_d}_2 \leq B_{d,2}$ and $\nbr{W_d}_{\Fr} \leq B_{d,\Fr}$ for all $d \in [D]$ for weight matrices when necessary.

Given a loss function $g(\cdot,\cdot)$, we denote a class of loss functions measuring the discrepancy between a DNN's output $f\rbr{\cW_{D},x}$ and the corresponding observation $y \in \cY_{m}$ for a given input $x \in \cX_{m}$ as
\begin{align*}
	\cG \rbr{\cF_{D,\Jac} } = \big\{g(f\rbr{\cW_{D},x},y) \in \RR~|~x \in \cX_{m}, y \in \cY_{m}, f\rbr{\cdot,\cdot} \in \cF_{D,\Jac} \big\},
\end{align*}
where the sets of bounded inputs $\cX_{m}$ and the corresponding observations $\cY_{m}$ are
\begin{align*}
	\cX_{m} = \cbr{x_i \in \RR^{p_0} \mid \nbr{x_i}_2 \leq R~\text{for all}~i \in [m]} ~\text{and}~~
	\cY_{m} = \cbr{y_i \in \sbr{p_{D}} ~~\text{for all}~~i \in [m]}.
\end{align*}
Then the empirical Rademacher complexity (ERC) of $\cG \rbr{\cF_{D,\Jac} }$ given $\cX_{m}$ and $\cY_{m}$ is
\begin{align}
	\cR_m \rbr{\cG \rbr{\cF_{D,\Jac} }} =  \mathop{\EE}_{\epsilon \in \{ \pm 1 \}^m } \sbr{ \sup_{f \in \cF_{D,\Jac}} \bigg| \frac{1}{m} \sum_{i=1}^{m} \epsilon_i  \cdot  g\rbr{f\rbr{\cW_{D},x_i},y_i} \bigg| }, \label{eqn_dnn:rademacher}
\end{align}
where $\{ \pm 1 \}^m \in \RR^{m}$ is the set of vectors only containing entries $+1$ and $-1$, and $\epsilon \in \RR^{m}$ is a vector with Rademacher entries, i.e., $\epsilon_i = +1$ or $-1$ with equal probabilities.

Take the classification as an example. For multi-class classification, suppose $p_{D} = N_{\rm class}$ is the number of classes. Consider $g$ with bounded outputs, namely the {\it ramp risk}. Specifically, for an input $x$ belonging to class $y \in [N_{\rm class}]$, we denote $\nu_{\cW_{D}}^{x,y} = \rbr{f\rbr{\cW_{D},x}}_{y} - \max_{i \neq y} \rbr{f\rbr{\cW_{D},x}}_i$. For a given real value for the margin $\gamma >0$, the class of ramp risk functions $\cG_{\gamma} \rbr{\cF_{D,\Jac} }$ with the margin $\gamma$ and $\frac{1}{\gamma}$-Lipschitz continuous function $g_{\gamma}$ is defined as
\begin{align}
	&\cG_{\gamma} \rbr{\cF_{D,\Jac} } = \cbr{g_{\gamma}\rbr{f\rbr{\cW_{D},x},y} | f_{D}\in \cF_{D,\Jac}}, \label{eqn_dnn:ramp} \\
	&g_{\gamma}\rbr{f\rbr{\cW_{D},x},y} = \left\{ 
		\begin{array}{ll}
		0,& \nu_{\cW_{D}}^{x,y} > \gamma \\
		1 - \frac{\nu_{\cW_{D}}^{x,y}}{\gamma},& \nu_{\cW_{D}}^{x,y} \in \sbr{0,\gamma} \\
		1,& \nu_{\cW_{D}}^{x,y} < 0,
		\end{array}
		\right. \nonumber
\end{align}
For convenience, we denote $g_{\gamma}\rbr{f\rbr{\cW_{D},x},y}$ as $g_{\gamma}\rbr{f\rbr{\cW_{D},x}}$ (or $g_{\gamma}$) in the rest of the paper.

Then the generalization error bound \cite{bartlett2017spectrally} \removed{(Lemma 3.1)} states the  following. Given any real $\delta \in \rbr{0,1}$ and $g_{\gamma}$, with probability at least $1-\delta$, we have that for any $f\rbr{\cdot,\cdot} \in \cF_{D,\Jac}$, the generalization error for classification is upper bounded with respect to (w.r.t.) the ERC satisfies
\begin{align}
\mathop{\PP} \sbr{  \argmax_{j} \rbr{f\rbr{\cW_{D},x}}_j \neq y} \leq \frac{1}{m} \sum_{i=1}^{m} g_{\gamma}\rbr{f\rbr{\cW_{D},x_i}} + 2 \cR_m\rbr{\cG_{\gamma} \rbr{\cF_{D,\Jac} }} +  3 \sqrt{\frac{\log \rbr{\frac{2}{\delta}}}{2m} }. \label{eqn_dnn:gen_err_cf}
\end{align}

The right hand side (R.H.S.) of \eqref{eqn_dnn:gen_err_cf} is viewed as a guaranteed error bound for the gap between the testing and the empirical training performance. Since the ERC is generally the dominating term in \eqref{eqn_dnn:gen_err_cf}, a small $\cR_m$ is desired for DNNs given the loss function $g_{\gamma}$. Analogous results hold for regression tasks; see e.g., \cite{kearns1994introduction,mohri2012foundations} for details. 

\section{Generalization Error Bound for DNNs}\label{sec:dnn}


\subsection{A Tighter ERC Bound for DNNs}

We first provide the ERC bound for the class of DNNs defined in \eqref{eqn_dnn:nn_Dl} and the ramp loss functions in the following theorem. The proof is provided in Appendix~\ref{pf:thm:tighter_upperbd}.
\begin{theorem}\label{thm:tighter_upperbd}
Let $g_{\gamma}$ be a $\frac{1}{\gamma}$-Lipschitz loss function and ${\cF}_{D,\Jac}$ be the class of DNNs defined in \eqref{eqn_dnn:nn_Dl}, $p_d = p$, $r_d = r$ for all $d \in [D]$, $B_{1:D}^{\Jac} = \max_{x \in \cX_{m}} B_{1:D}^{\Jac,x}$, $B^{\Jac}_{\backslash d} = \max_{d \in [D], x \in \cX_{m}} B^{\Jac, x}_{1:(d-1)} B^{\Jac, x}_{(d+1):D}$, and $C^{\Net} = \frac{B^{\Jac}_{\backslash d} \cdot R \sqrt{Dm/r} \cdot \max_{d} B_{d,2}/\gamma}{\sup_{f \in \cF_{D,\nbr{\cdot}_2}, x \in \cX_{m} } g_{\gamma} \rbr{f\rbr{\cW_{D},x}}}$. Then we have
\begin{align*}
\cR_m \rbr{\cG_{\gamma} \rbr{{\cF}_{D,\Jac}} } = \cO \rbr{ \frac{R \cdot B_{1:D}^{\Jac} \sqrt{D pr \log C^{\Net} } }{\gamma \sqrt{m}} }. 
\end{align*}
\end{theorem}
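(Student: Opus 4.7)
My plan is to bound $\cR_m(\cG_\gamma(\cF_{D,\Jac}))$ via Dudley's entropy integral applied to a covering of $\cF_{D,\Jac}$ in the sup norm over $\cX_m$. Since $g_\gamma$ is $1/\gamma$-Lipschitz and bounded by $1$, contraction reduces the task to covering $\cF_{D,\Jac}$ itself at various resolutions $\varepsilon$; the $1/\gamma$ then appears cleanly as a prefactor. The integral will be truncated at a radius of order $\sup_{f,x} g_\gamma(f(\cW_D,x))$, which is where the logarithmic factor $\log C^{\Net}$ arises.

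The heart of the argument is a new Lipschitz estimate of the map $\cW_D \mapsto f(\cW_D, x)$ in terms of the Jacobians. The plan is to telescope through hybrid networks $\cW^{(d)}_D = (W_1,\ldots,W_d, W'_{d+1},\ldots,W'_D)$ and write
\[
f(\cW_D,x) - f(\cW'_D,x) = \sum_{d=1}^D \bigl[f(\cW^{(d)}_D,x) - f(\cW^{(d-1)}_D,x)\bigr].
\]
The $d$-th summand is obtained by propagating the perturbation $(W_d - W'_d)\, h_{d-1}(x)$ through the top layers $d+1,\ldots,D$, where $h_{d-1}(x)$ is the activation feeding layer $d$ in the network $\cW'_D$. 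Since ReLU is $1$-Lipschitz and positively homogeneous, this propagation is controlled by the operator norm of $J^{x}_{(d+1):D}$ (evaluated at an appropriate interpolated network), and the input to layer $d$ satisfies $\|h_{d-1}(x)\|_2 \leq B^{\Jac,x}_{1:(d-1)} R$. This yields
\[
\|f(\cW_D,x) - f(\cW'_D,x)\|_2 \;\lesssim\; R \sum_{d=1}^D B^{\Jac}_{\backslash d}\, \|W_d - W'_d\|_2,
\]
replacing the usual product $\prod_{j \neq d} B_{j,2}$ by the much smaller Jacobian factor $B^{\Jac}_{\backslash d}$.

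From this Lipschitz bound I build a product cover: cover each $W_d$ (rank at most $r$, spectral norm at most $B_{d,2}$) to resolution $\varepsilon_d$ in operator norm; since low-rank matrices form a set of effective dimension $O(pr)$, the log-covering number of the $d$-th factor is $O\bigl(pr \log(B_{d,2}/\varepsilon_d)\bigr)$. Choosing $\varepsilon_d$ proportional to $B_{d,2}/B^{\Jac}_{\backslash d}$ times a common resolution $\varepsilon / (R \cdot D)$ balances the per-layer contributions and gives a total log-covering number of order $Dpr \log(C^{\Net}/\varepsilon)$. Plugging this into Dudley's integral yields
\[
\cR_m(\cG_\gamma(\cF_{D,\Jac})) \;\lesssim\; \frac{1}{\gamma \sqrt{m}} \int_0^{\sup g_\gamma} \sqrt{Dpr \log(C^{\Net}/\varepsilon)}\, d\varepsilon \;\lesssim\; \frac{R\, B^{\Jac}_{1:D}\sqrt{Dpr \log C^{\Net}}}{\gamma \sqrt{m}},
\]
after simplifying and using that $B^{\Jac}_{1:D}$ upper bounds the overall scale of outputs at the top of the integral.

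The main obstacle will be making the telescoping Lipschitz estimate fully rigorous despite the nonsmoothness of ReLU and the fact that the Jacobian $J^x$ depends on which network (and which $x$) we evaluate it at. The clean way around this is to exploit that ReLU acts as a diagonal $\{0,1\}$ mask, so at any interpolated network the induced top-layer linear map is dominated in operator norm by $\max_{\cW_D,x} \|J^x_{(d+1):D}\|_2 \leq B^{\Jac}_{\backslash d}$, which is precisely the quantity absorbed into the bound. A secondary technical point is ensuring the truncation in Dudley's integral is chosen so that the dependence on $B^{\Jac}_{\backslash d}$, $R$, $D$, $m$, $\max_d B_{d,2}$, and $\gamma$ aggregates exactly into the stated $C^{\Net}$; this requires a careful choice of lower/upper limits in the integral and a union bound across layers rather than using $\varepsilon_d$ identical across $d$.
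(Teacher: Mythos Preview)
Your proposal is correct and follows the paper's overall strategy: a telescoping Jacobian-based Lipschitz estimate for $\cW_D \mapsto f(\cW_D,x)$ (yielding the factor $B^{\Jac}_{\backslash d}$ in place of $\prod_{j\neq d}\|W_j\|_2$), followed by Dudley's entropy integral truncated at $\alpha=\sup_{f,x} g_\gamma(f(\cW_D,x))\le R\,B^{\Jac}_{1:D}/\gamma$.

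The one substantive technical difference is how the rank constraint enters the covering. The paper reparameterizes each weight as $W_d=U_dV_d^\top$ with $U_d,V_d\in\RR^{p\times r}$, proves Lipschitz continuity of the network in the flat vector $(U_1,V_1,\ldots,U_D,V_D)\in\RR^{2Dpr}$ (Lemma~\ref{lem:lipschitz_para}), and then applies a single Euclidean-ball cover of that parameter space (Lemma~\ref{lem:cover_lip}); this packages everything into one Lipschitz constant $L_w$ and one radius $K$, and the stated $C^{\Net}$ is exactly $KL_w\sqrt{m}/(\alpha\sqrt{h})$. You instead cover the rank-$r$ matrices directly in operator norm, layer by layer, and balance the per-layer resolutions $\varepsilon_d$. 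Both routes give a $Dpr\log(\cdot)$ metric entropy and the same final bound; the paper's factorization is a bit simpler bookkeeping-wise (no per-layer balancing, and the exact form of $C^{\Net}$ drops out mechanically), while your direct manifold cover avoids the artificial $UV^\top$ split and would adapt more cleanly to layer-dependent widths or other structured weight classes.
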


\begin{remark}\label{rk:jac}
Note that $C^{\Net}$ depends on the norm of Jacobian, which is significantly smaller than the product of matrix norms that is exponential on $D$ in general. For example, when we obtain the network from stochastic gradient descent using randomly initialized weights, then $B^{\Jac} \ll \prod_{d} B_{d,2}$. Empirical distributions of $B^{\Jac}$ and $\prod_{d} B_{d,2}$ are provided in Figure~\ref{fig:compare} (c) -- (e), where $B^{\Jac}$ has a dependence slower than some low degree poly(depth), rather than exponential on the depth as in $\prod_{d} B_{d,2}$. Thus, $\log C^{\Net}$ can be considered as a constant almost independent of $D$ in practice. Even in the worst case that $B^{\Jac} \approx \prod_{d} B_{d,2}$ (this almost never happens in practice), our bound is still tighter than existing spectral norm based bounds \cite{bartlett2017spectrally,neyshabur2017pac} by an order of $\sqrt{D}$. Also note that $C^{\Net}$ is a quantity (including $B^{\Jac}_{\backslash d}$) only depending on the training dataset due to the ERC bound. 
\end{remark}

For convenience, we treat $R$ as a constant. From Remark~\ref{rk:jac}, we also treat $\log C^{\Net}$ as a constant w.l.o.g. We achieve \[\tilde{\cO} ( {B_{1:D}^{\Jac} \sqrt{D pr/m}}/\gamma )\] in Theorem~\ref{thm:tighter_upperbd}, which is significantly tighter than existing results based on the network sizes and norms of weight matrices, as shown in Table~\ref{table:compare}. In particular, \cite{neyshabur2015norm} show an exponential dependence on $D$, i.e., \[\cO (2^D \Pi_{d=1}^{D} B_{d,\Fr} / (\gamma\sqrt{m})) \], which can be significantly larger than ours. \cite{bartlett2017spectrally,neyshabur2017pac} demonstrate polynomial dependence on sizes and the spectral norm of weights, i.e., \[\tilde{\cO} ( {\Pi_{d=1}^D B_{d,2} \sqrt{D^3 pr/m}}/\gamma )\]. Our result in Theorem~\ref{thm:tighter_upperbd} is tighter by an order of $D$, which is significant in practice. Moreover, \cite{golowich2017size} demonstrate a bound w.r.t the Frobenius norm as \[\tilde{\cO} \big({\Pi_{d=1}^D B_{d,\Fr}} \min \big\{ \sqrt{\frac{D}{m}}, {m^{-\frac{1}{4}} \log^{\frac{3}{4}}\rbr{m} \sqrt{\log \rbr{ C^{} } }  \big\} }/\gamma \big)\], where \[C^{} = \frac{R \cdot \Pi_{d=1}^D B_{d,\Fr}}{\sup_{x \in \cX_{m}} \nbr{f\rbr{\cW_{D},x}}_2 }\]. This has a tighter dependence on network sizes. Nevertheless, $\nbr{W_d}_{\Fr}$ is generally $\sqrt{r}$ times larger than $\nbr{W_d}_{2}$. Thus, $\Pi_{d=1}^D B_{\Fr,2}$ is \[r^{D/2}\] times larger than $\Pi_{d=1}^D B_{2,2}$. 
Moreover, $\log (C^{})$ is linear on $D$ except that the stable ranks $\nbr{W_d}_{\Fr}/\nbr{W_d}_{2}$ across all layers are close to 1 (rather than $\log C^{\Net}$ being almost independent on $D$ in Theorem~\ref{thm:tighter_upperbd}). In addition, it has \[m^{-\frac{1}{4}}\] dependence rather than \[m^{-\frac{1}{2}}\] except when $D = \cO\rbr{\sqrt{m}}$. Numerical comparison is provided in Figure~\ref{fig:compare} (a), where our bound is orders of magnitude better than the others. Note that our bound is based on a novel characterization of Lipschitz properties of DNNs, which may be of independent interest. We refer to Appendix~\ref{pf:thm:tighter_upperbd} for details.

We also remark that when achieving the same order of generalization errors, we allow the choices of larger dimensions ($D,p$) and norms of weight matrices, which lead to stronger expressive power for DNNs. For example, even in the worst case that $B^{\Jac} \approx \prod_{d} B_{d,2}$, when achieving the same bound with $\nbr{W_d}_{2}=1$ in spectral norm based results (e.g. in ours) and $\nbr{W_d}_{\Fr}=1$ in Frobenius norm based results (e.g., in \cite{golowich2017size}), they only have $\nbr{W_d}_{2}=\cO(1/\sqrt{r})$ in Frobenius norm based results. The later results in a much smaller space for eligible weight matrices as $r$ is of order $p$ in general (i.e., $r =  \delta p$ for some constant $\delta \in (0,1)$), which leads to weaker expressive ability of DNNs. We also demonstrate numerically in Figure~\ref{fig:compare} (b) that when norms of weight matrices are constrained to be very small, both training and testing performance degrade significantly. 

\subsection{ERC Bound for Bounded Loss}\label{sec:indep}

When, in addition, the loss function is bounded, we have that the ERC bound can be free of the Jacobian term, as in the following corollary. The proof is provided in Appendix~\ref{pf:cor:norm_ind}.
\begin{corollary}\label{cor:norm_ind}
In addition to the conditions in Theorem~\ref{thm:tighter_upperbd}, suppose we further let $g_{\gamma}$ be bounded, i.e., $\abr{g_{\gamma}} \leq b$. Then the ERC satisfies
\begin{align}
\cR_m \rbr{\cG_{\gamma} \rbr{{\cF}_{D,\Jac}} } = \cO\rbr{ \min\cbr{ \frac{R \cdot B_{1:D}^{\Jac} }{\gamma}, b } \cdot \sqrt{\frac{D pr \log C^{\Net}}{m} } }. \label{eqn:norm_ind}
\end{align}
\end{corollary}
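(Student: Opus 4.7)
The first term in the minimum of \eqref{eqn:norm_ind}, $R B_{1:D}^{\Jac}/\gamma \cdot \sqrt{Dpr\log C^{\Net}/m}$, is exactly the bound established by Theorem~\ref{thm:tighter_upperbd} and requires no additional work; the sole new content in the corollary is the companion bound in which this prefactor is replaced by $b$ under the hypothesis $|g_{\gamma}|\leq b$, after which the corollary follows from the trivial fact that $\cR_m$ is simultaneously bounded by any two valid upper bounds.

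My plan is to revisit the chaining argument that drives the proof of Theorem~\ref{thm:tighter_upperbd} and modify only the upper limit of the integral. That proof builds an empirical $L_2$ cover of $\cG_{\gamma}(\cF_{D,\Jac})$ by discretizing the weight matrices, producing a covering-number bound of the form
\begin{align*}
\log \cN(\cG_{\gamma}(\cF_{D,\Jac}),\epsilon)\;\lesssim\;D\,p\,r\,\log(C^{\Net}/\epsilon),
\end{align*}
and then controls the Rademacher complexity by Dudley's entropy integral
\begin{align*}
\cR_m(\cG_{\gamma}(\cF_{D,\Jac}))\;\leq\;\inf_{\alpha\geq 0}\bigg\{4\alpha+\frac{12}{\sqrt{m}}\int_{\alpha}^{M}\!\sqrt{\log \cN(\cG_{\gamma}(\cF_{D,\Jac}),\epsilon)}\,d\epsilon\bigg\},
\end{align*}
where $M$ may be taken to be any upper bound on the $L_2(P_n)$-diameter of the class. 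In Theorem~\ref{thm:tighter_upperbd}, $M$ is chosen via the Lipschitz analysis of $g_{\gamma}\circ f$ to be $\mathcal{O}(R B_{1:D}^{\Jac}/\gamma)$. Under the additional hypothesis $|g_{\gamma}|\leq b$, however, any two members of $\cG_{\gamma}(\cF_{D,\Jac})$ differ pointwise by at most $2b$, so we may equally well take $M=2b$.

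With this smaller choice of $M$, a short calculation using $\int_{\alpha}^{b}\sqrt{\log(C^{\Net}/\epsilon)}\,d\epsilon\lesssim b\,\sqrt{\log(C^{\Net}/\alpha)}$ (valid for $0<\alpha\leq b$) and the standard optimization $\alpha\asymp b/\sqrt{m}$ yields
\begin{align*}
\cR_m(\cG_{\gamma}(\cF_{D,\Jac}))\;\lesssim\;b\,\sqrt{\frac{Dpr\,\log C^{\Net}}{m}}.
\end{align*}
Combining with the bound of Theorem~\ref{thm:tighter_upperbd} via $\cR_m\leq\min\{A_1,A_2\}$ whenever both $A_1,A_2$ are valid upper bounds delivers \eqref{eqn:norm_ind}.

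The one point that warrants care is confirming that the weight-matrix cover produced in the proof of Theorem~\ref{thm:tighter_upperbd} continues to yield a covering-number bound of the above logarithmic form all the way down to scales $\epsilon\to 0$, so that Dudley's integral truncated at $M=2b$ is finite and collapses into the $b\sqrt{\log C^{\Net}}$ factor. This is immediate because the cover is built parameter-wise rather than from $L_2$-discretization of the losses, and the $\log(C^{\Net}/\epsilon)$ dependence on $\epsilon$ persists uniformly. With this one routine verification in place, the corollary is a one-line modification of the existing proof, and the definition of $C^{\Net}$ (which already contains $\sup g_{\gamma}$ in its denominator) absorbs any residual $\log(1/b)$ term into $\log C^{\Net}$.
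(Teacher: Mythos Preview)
Your proposal is correct and matches the paper's own argument almost exactly. The paper's proof simply observes that the quantity $\alpha=\sup_{f,x}|g_\gamma(f(\cW_D,x))|$ appearing as the upper limit in the Dudley integral (and hence as the multiplicative prefactor after optimization) now satisfies $\alpha\le\min\{b,\,R B_{1:D}^{\Jac}/\gamma\}$ under the boundedness hypothesis, and plugs this into the bound of Lemma~\ref{lem:cover_lip}; your modification of the integral's upper limit $M$ is the same step phrased in terms of the class diameter rather than $\sup|g_\gamma|$, which differ only by a constant.
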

The boundedness of $g_{\gamma}$ holds for certain loss functions, e.g., the ramp risk defined in \eqref{eqn_dnn:ramp} and cross entropy loss. When $b$ is constant (e.g., $b=1$ for the ramp risk) and $R B_{1:D}^{\Jac} \gg \gamma$, we have that the ERC reduces to $\tilde{\cO} (\sqrt{{D pr}/{m}})$. This is close to the VC dimension of DNNs, which can be significantly tighter than existing norm based bounds in general. Moreover, our bound \eqref{eqn:norm_ind} is also tighter than recent results that are free of the dependence on weight norms \cite{zhou2018understanding,arora2018stronger}. For example, \cite{zhou2018understanding} show that the generalization bound for CNNs is $\tilde{\cO}\big({D\sqrt{pr^2/m }}\big)$, which results in a bound larger than our \eqref{eqn:norm_ind} by $\cO(\sqrt{D r})$. \cite{arora2018stronger} derive a bound for a compressed network in terms of some error-resilience parameters, which is $\tilde{\cO}(\sqrt{D^3 p^2/m})$ since the cushion parameter therein is of the order $\mu=\cO(1/\sqrt{p})$. Similar norm free results hold for the architectures discussed in Section~\ref{sec:structure} using argument for Corollary~\ref{cor:norm_ind}, which we skip due to space limit. 

\section{Exploring Network Structures}\label{sec:structure}

The generic result in Section~\ref{sec:dnn} can be further highlighted explicitly using specific structures of the networks. In this section, we consider two popular architectures of DNNs, namely convolutional neural networks (CNNs) \cite{krizhevsky2012imagenet} and residual networks (ResNets) \cite{he2016deep}, and provide sharp characterization of the corresponding generalization bounds. In particular, we consider orthogonal filters and normalized weight matrices, which have shown good performance in both optimization and generalization \cite{mishkin2015all,huang2017orthogonal}. Such constraints can be enforced using regularizations on filters and weight matrices, which is very efficient to implement in practice. This is also closely related with normalization approaches, e.g., batch normalization \cite{ioffe2015batch} and layer normalization \cite{ba2016layer}, which have achieved tremendous empirical success. 

\subsection{CNNs with Orthogonal Filters}\label{sec:cnn}

CNNs are one of the most powerful architectures in deep learning, especially in tasks related with images and videos \cite{goodfellow2016deep}. We consider a tight characterization of the generalization bound for CNNs by generating the weight matrices using unit norm orthogonal filters, which has shown great empirical performance \cite{huang2017orthogonal,xie2017all}. Specifically, we generate the weight matrices using a circulant approach, as follows. For the convolutional operation at the $d$-th layer, we have $n_{d}$ channels of convolution filters, each of which is generated from a $k_{d}$-dimensional feature using a stride side $s_{d}$. Suppose that $s_{d}$ divides both $k_{d}$ and $p_{d-1}$, i.e., $\frac{k_{d-1}}{s_{d}}$ and $\frac{p_{d-1}}{s_{d}}$ are integers, then we have $p_{d} = \frac{n_{d} \cdot p_{d-1}}{s_{d}}$. This is equivalent to fixing the weight matrix at the $d$-th layer to be generated as in \eqref{eqn_dnn:cnn_weight}, where for all $j \in [n_{d}]$, each $W_{d}^{(j)} \in \RR^{\frac{p_{d-1}}{s_{d}} \times p_{d-1}}$ is formed in a circulant-like way using a vector $w^{(d,j)} \in \RR^{k_{d}}$ with unit norms for all $j$ as 
\begin{align}
&W_{d} = \sbr{W_{d}^{(1)\top} \cdots~W_{d}^{(n_{d})\top}}^\top \in \RR^{p_{d} \times p_{d-1}},  \label{eqn_dnn:cnn_weight}\\
&W_{d}^{(j)} = \sbr{\begin{array}{c}
w^{(d,j)} ~\underbrace{0\cdot\cdots\cdots\cdots\cdots\cdots\cdots\cdots 0}_{\in \RR^{p_{d-1}-k_{d}}} \\
\underbrace{0\cdots 0}_{\in \RR^{s_d}} ~ w^{(d,j)} \underbrace{0\cdot\cdots\cdots\cdots\cdots\cdots 0}_{\in \RR^{p_{d-1}-k_{d}-s_{d}} } \\
\vdots \\
w^{(d,j)}_{(s_{d}+1):k_{d}} \underbrace{0\cdots\cdots\cdots\cdots 0}_{\in \RR^{p_{d-1}-k_{d}}} ~w^{(d,j)}_{1:s_{d}}
\end{array} }. \label{eqn_dnn:cnn_weight2}
\end{align}

When the stride size $s_{d}=1$, $W_{d}^{(j)}$ corresponds to a standard circulant matrix \cite{davis2012circulant}. The following lemma establishes that when $\cbr{ w^{(d,j)} }_{j=1}^{n_{d}}$ are orthogonal vectors with unit Euclidean norms, the generalization bound only depend on $s_{d}$ and $k_{d}$ that are independent of the width $p_{d}$. The proof is provided in Appendix~\ref{pf:cor:cnn_bd}.

\begin{corollary}\label{cor:cnn_bd}
Let $g_{\gamma}$ be a $\frac{1}{\gamma}$-Lipschitz and bounded loss function, i.e., $\abr{g_{\gamma}} \leq b$, and $\cF_{D,\Jac}$ be the class of CNNs defined in \eqref{eqn_dnn:nn_Dl}. Suppose the weight matrices in CNNs are formed as in \eqref{eqn_dnn:cnn_weight} and \eqref{eqn_dnn:cnn_weight2} with $s_{d}=s$, $k_d = k$, and $s$ divides both $k$ and $p_{d}$ for all $d \in [D]$, where $\cbr{ w^{(d,j)} }_{j=1}^{n_{d}}$ satisfies $w^{(j)\top} w^{(i)} = 0$ for all $i,j \in [n_{d}]$ and $i \neq j$ with $\nbr{w^{(d,j)}}_2 = 1$ for all $j \leq n_{d}$. Denote $C^{\Net} = \frac{B^{\Jac}_{\backslash d,2} \cdot R \sqrt{Dm/s}/\gamma}{\mathop{\sup}_{f \in \cF_{D,\Jac}, x \in \cX_{m} } g_{\gamma} \rbr{f\rbr{\cW_{D},x}}}$. Then the ERC satisfies
\begin{align*}
\cR_m \rbr{\cG_{\gamma} \rbr{{\cF}_{D,\Jac}} } = \cO\rbr{ \min\cbr{ \frac{R \rbr{{k}/{s}}^{D/2} }{\gamma}, b } \cdot \sqrt{\frac{k \sum_{d=1}^{D} n_{d} \cdot \log C^{\Net}}{m} } }.
\end{align*}
\end{corollary}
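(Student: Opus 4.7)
The plan is to specialize Corollary~\ref{cor:norm_ind} to CNNs by computing two structural quantities: (i) a sharp per-layer spectral norm bound that controls $B_{1:D}^{\Jac}$, and (ii) the effective parameter count of the network, which takes the place of the generic $Dpr$ term in the covering argument. Since each $W_d$ is determined entirely by the $n_d$ unit-norm orthogonal filters $\cbr{w^{(d,j)}}_{j=1}^{n_d}$, both quantities should scale with $k$, $s$, and $n_d$ rather than with the ambient width $p_d$.

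First I would bound $\nbr{W_d}_2$ via a patch decomposition. For any $x \in \RR^{p_{d-1}}$ and each row index $r$, define the cyclic patch $u_r = \rbr{x_{(r-1)s+1},\ldots,x_{(r-1)s+k}} \in \RR^k$, with indices read modulo $p_{d-1}$ so as to handle the wrap-around rows displayed in \eqref{eqn_dnn:cnn_weight2}. Then the $r$-th entry of $W_d^{(j)} x$ equals $\langle w^{(d,j)}, u_r\rangle$. Orthonormality of $\cbr{w^{(d,j)}}_{j=1}^{n_d}$ makes $P_d = \sum_j w^{(d,j)} \rbr{w^{(d,j)}}^\top$ an orthogonal projection in $\RR^{k\times k}$ of rank $n_d$, so
\begin{align*}
\nbr{W_d x}_2^2 \;=\; \sum_{j=1}^{n_d} \nbr{W_d^{(j)} x}_2^2 \;=\; \sum_r \nbr{P_d u_r}_2^2 \;\leq\; \sum_r \nbr{u_r}_2^2.
\end{align*}
Because $s$ divides $k$ and every coordinate of $x$ appears in exactly $k/s$ patches, the right-hand side equals $(k/s)\nbr{x}_2^2$, giving $\nbr{W_d}_2 \leq \sqrt{k/s}$. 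Coordinate-wise $1$-Lipschitzness of ReLU then yields $B_{1:D}^{\Jac} \leq \prod_{d=1}^D \nbr{W_d}_2 \leq (k/s)^{D/2}$.

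Next I would re-run the covering argument underlying Corollary~\ref{cor:norm_ind}. The $Dpr$ factor in its ERC bound arises from the ambient dimension of the weight-matrix parameter space: a rank-$r$ matrix in $\RR^{p\times p}$ contributes $\cO(pr)$ free parameters, summed across $D$ layers. In the CNN setting, the $d$-th weight matrix is generated from only $k n_d$ filter entries, so the per-layer count drops from $pr$ to $k n_d$ and the aggregate count becomes $k\sum_{d=1}^D n_d$. Plugging this into the chaining bound, together with the Jacobian estimate $B_{1:D}^{\Jac} \leq (k/s)^{D/2}$ and the uniform spectral bound $\nbr{W_d}_2 \leq \sqrt{k/s}$ (which is what turns the $\sqrt{Dm/r}$ inside the generic $C^{\Net}$ into $\sqrt{Dm/s}$ once $\max_d B_{d,2}$ is absorbed), yields the claimed bound, with the $\min\cbr{\cdot,b}$ inherited directly from the boundedness of $g_{\gamma}$.

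The main obstacle will be the first step: confirming that the stride-$s$ circulant structure combined with cross-filter orthonormality really delivers $\nbr{W_d}_2 \leq \sqrt{k/s}$ exactly, rather than with a dimension-dependent slack. The subtle point is the wrap-around rows of \eqref{eqn_dnn:cnn_weight2}, where the filter is split between the tail and head of a row; reading patches cyclically rescues the exact coverage count of $k/s$, and the projection inequality $\nbr{P_d u_r}_2 \leq \nbr{u_r}_2$ then closes the argument without any loss. Re-threading the chaining argument of Corollary~\ref{cor:norm_ind} through the smaller, convolutionally parameterized search space, and checking that the logarithmic factor inside $C^{\Net}$ correctly replaces $r$ by $s$, is then a careful but routine bookkeeping exercise rather than a conceptual hurdle.
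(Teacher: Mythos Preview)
Your proposal is correct and follows the same overall architecture as the paper: (i) establish $\nbr{W_d}_2 \leq \sqrt{k/s}$ under the orthonormal-filter assumption, (ii) replace the generic parameter count $Dpr$ in the covering/chaining argument by the filter count $k\sum_{d=1}^D n_d$, and (iii) invoke the bounded-loss argument of Corollary~\ref{cor:norm_ind} to get the $\min\{\cdot,b\}$.

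The one genuine difference is in step (i). The paper computes $W_d^\top W_d$ entrywise: using that the $n_d=k$ orthonormal filters form an orthogonal matrix (so their \emph{rows} are also orthonormal, giving $\sum_j (w_i^{(d,j)})^2=1$ and $\sum_j w_i^{(d,j)} w_q^{(d,j)}=0$), it shows $W_d^\top W_d$ is exactly $(k/s)I$, hence $\nbr{W_d}_2=\sqrt{k/s}$, and then passes to $n_d<k$ by a row-selection argument. Your patch decomposition is a cleaner, action-based alternative: writing $\nbr{W_d x}_2^2=\sum_r \nbr{P_d u_r}_2^2 \leq \sum_r \nbr{u_r}_2^2=(k/s)\nbr{x}_2^2$ handles all $n_d\leq k$ at once via the projection inequality, and the cyclic reading of patches deals with the wrap-around rows transparently. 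Your route yields only the inequality $\nbr{W_d}_2\leq\sqrt{k/s}$ rather than equality, but that is all the Jacobian bound needs. The remaining bookkeeping (parameter count $h=k\sum_d n_d$, the form of $K$, $L_w$, and the resulting $C^{\Net}$) matches the paper's.
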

Since $n_{d} \leq k$ in our setting, the ERC for CNNs is proportional to $\sqrt{D k^2}$ instead of $\sqrt{Dpr}$. For the orthogonal filtered considered in Corollary~\ref{cor:cnn_bd}, we have $\nbr{W_d}_{\Fr} = \sqrt{p_d}$ and $\nbr{W_d}_{2,1} = p_d$, which lead to the bounds of CNNs in existing results in Table~\ref{table:compare_cnn}. In practice, one usually has $k_d \ll p_d$, which exhibit a significant improvement over existing results, i.e., $\sqrt{D k^2} \ll \sqrt{D^3 p^2}$. Even without the orthogonal constraint on filters, the rank $r$ in CNNs is usually of the same order with width $p$, which also makes the existing bound undesirable. On the other hand, it is widely used in practice that $k_{d} = \mu s_{d}$ for some small constant $\mu\geq 1$ in CNNs, then we have $({k_d}/{s_{d}})^{D/2} \ll p^{D/2}$ resulted from $\prod_{d} B_{d,\Fr}$.

\begin{table}[t]
\begin{center}
\caption{Comparison with existing norm based capacity bounds of CNNs. We suppose $R$ and ${\gamma}$ are generic constants for ease of illustration. The results of CNNs in existing works are obtained by substituting the corresponding norms of the weight matrices generated by orthogonal filters, i.e., $\nbr{W_{d}}_{2} = \sqrt{{k}/{s}}$, $\nbr{W_{d}}_{\Fr} = \sqrt{p}$, and $\nbr{W_{d}}_{2,1} = p$.}
{
	\renewcommand{\arraystretch}{1.5}
	\begin{tabular}{c|c}
		\Xhline{1 pt}
		Capacity Bound & CNNs  \\
		\hline
		\hspace{-0.09in}\cite{neyshabur2015norm}\hspace{-0.09in} & $\cO \Big( \frac{ 2^D \cdot  p_{}^{\frac{D}{2}} }{\sqrt{m}} \Big)$ \\
		\hline
		\cite{bartlett2017spectrally} & $\widetilde{\cO} \bigg( \frac{  \rbr{\frac{k}{s}}^{\frac{D-1}{2}} \cdot \sqrt{D^3 p^2} }{\sqrt{m}} \bigg)$ \\
		\hline
		\hspace{-0.09in}\cite{neyshabur2017pac}\hspace{-0.09in} & $\widetilde{\cO} \bigg( \frac{  \rbr{\frac{k}{s}}^{\frac{D-1}{2}} \cdot \sqrt{D^3 p^2} }{\sqrt{m}} \bigg)$ \\
		\hline
		\cite{golowich2017size} & \hspace{-0.07in} $\widetilde{\cO}\rbr{ p^{\frac{D}{2}} \min\cbr{\frac{1}{\sqrt[4]{m}}, \sqrt{\frac{D}{m}}} }$ \hspace{-0.11in} \\
		\hline
		Our results & \note{$\tilde{\cO} \bigg( \frac{  \rbr{\frac{k}{s}}^{\frac{D}{2}}\sqrt{D k^2} }{\sqrt{m}} \bigg)$} \\
		\Xhline{1 pt}
	\end{tabular}\label{table:compare_cnn}
}
\end{center}
\end{table}

\begin{remark}\label{rk:2d_input}
Vector input is considered in Corollary~\ref{cor:cnn_bd}. For matrix inputs, e.g., images, similar results hold by considering vectorized input and permuting columns of $W_{d}$. Specifically, suppose $\sqrt{k_{d}}$ and $\sqrt{p_{d-1}}$ are integers for ease of discussion. Consider the input as a $p_{d-1}$ dimensional vector obtained by vectorizing a $\sqrt{p_{d-1}} \times \sqrt{p_{d-1}}$ input matrix. When the 2-dimensional (matrix) convolutional filters are of size $\sqrt{k_{d}} \times \sqrt{k_{d}}$, we form the rows of each $W_{d}^{(j)}$ by concatenating $\sqrt{k_{d}}$ vectors $\{w^{(j,i)}\}_{i=1}^{\sqrt{k_{d}}}$ 
padded with 0's, each of which is a concatenation of one row of the filter of size $\sqrt{k_{d}}$ with some zeros as follow:
\begin{align*}
\underbrace{w^{(j,1)}}_{\in \RR^{\sqrt{k_{d}}}} \underbrace{0 \cdots\cdots 0}_{\in \RR^{\sqrt{\frac{p_{d-1}}{k_{d}}}-\sqrt{k_{d}}}} \cdots\cdots
	\underbrace{w^{(j,\sqrt{k_{d}})}}_{\in \RR^{\sqrt{k_{d}}}} \hspace{-0.0in}\underbrace{0 \cdots\cdots 0}_{\in \RR^{\sqrt{\frac{p_{d-1}}{k_{d}}}-\sqrt{k_{d}}}} 
	\underbrace{0 \cdots\cdots\cdots 0}_{\in \RR^{p_{d-1} - \sqrt{p_{d-1}}}}.
\end{align*}

Correspondingly, the stride size is $\frac{s_{d}^2}{k_{d}}$ on average and we have $\nbr{W_{d}}_2 \leq \frac{k_{d}}{s_{d}}$ if $\nbr{w^{(j,i)}}_2 = 1$ for all $i,j$; see Appendix~\ref{apx:mat_filter} for details. This is equivalent to permuting the columns of $W_{d}$ generated as in \eqref{eqn_dnn:cnn_weight2} by vectorizing the matrix filters in order to validate the convolution of the filters with all patches of the matrix input. 
\end{remark}
\begin{remark}
A more practical scenario for CNNs is when a network has a few fully connected layers after the convolutional layers. Suppose we have $D_C$ convolutional layers and $D_F$ fully connected layers. From the analysis in Corollary~\ref{cor:cnn_bd}, when $s_{d} = k_{d}$ for convolutional layers and $\nbr{W_d}_2 = 1$ for fully connected layers, we have that the overall ERC satisfies \[\tilde{\cO} \Big( \frac{R \cdot \sqrt{D_C k^2 + D_F pr }}{\gamma \sqrt{m}} \Big)\].
\end{remark}

\subsection{ResNets with Structured Weight Matrices}\label{eqn:resnet}

Residual networks (ResNets) \cite{he2016deep} is one of the most powerful architectures that allows training of tremendously deep networks. Given an input $x \in \RR^{p_0}$, the output of a $D$-layer ResNet is defined as $f\rbr{\cV_{D},\cU_{D},x} = f_{V_{D},U_{D}} \rbr{\cdots f_{V_{1},U_{1}}\rbr{x} } \in  \RR^{p_D}$, where $f_{V_{d},U_{d}}\rbr{x} = \sigma \rbr{ V_{d} \cdot \sigma \rbr{ U_{d} x} + x}$. For any two layers $i,j \in [D]$ and input $x$, we denote $J_{i:j}^x$ as the Jacobian from layer $i$ to layer $j$, i.e., $f_{V_{i},U_{j}} \rbr{\cdots f_{V_{i},U_{i}}\rbr{x} } = J_{i:j}^x \cdot x$. 
Then we denote the class of ResNets with bounded weight matrices $\cV_{D} = \cbr{V_d \in \RR^{p_{d} \times q_{d}} }_{d=1}^D$, $\cU_{D} = \cbr{U_d \in \RR^{q_{d} \times p_{d-1}} }_{d=1}^D$ as
\begin{align}
\cF^{\RN}_{D,\Jac} = \big\{f\rbr{\cV_{D},\cU_{D},x} \in \RR^{p_D} ~\big|~ \text{sup}_{\cW_{D}} \nbr{J_{i,j}^x}_2 \leq B^{\Jac, x}_{i:j} \big\}, \label{eqn_dnn:resnet_Dl}
\end{align}
We also denote $\nbr{U_d}_2 \leq B_{U_d,2}$ and $\nbr{V_d}_2 \leq B_{V_d,2}$. We then provide an upper bound of the ERC for ResNets in the following corollary. The proof is provided in Appendix~\ref{pf:cor:upperbd_resnet_indep}.
\begin{corollary}\label{cor:upperbd_resnet_indep}
Let $g_{\gamma}$ be a $\frac{1}{\gamma}$-Lipschitz and bounded loss function, i.e., $\abr{g_{\gamma}} \leq b$, and $\cF^{\RN}_{D,\Jac}$ be the ResNets defined in \eqref{eqn_dnn:resnet_Dl} with $p_d = p$ and $q_d = q$ for all $d \in [D]$, $B_{1:D}^{\Jac} = \max_{x \in \cX_{m}} B_{1:D}^{\Jac,x}$, $B^{\Jac}_{\backslash d} = \max_{d \in [D], x \in \cX_{m}}$ $B^{\Jac, x}_{1:(d-1)} B^{\Jac, x}_{(d+1):D}$, and $C^{\Net} =  \frac{B^{\Jac}_{\backslash d} \max_{d} \rbr{B_{V_d,2} + B_{U_d,2}} R\sqrt{m/q} /\gamma}{\sup_{f \in \cF_{D,\Jac}, x \in \cX_{m} } g_{\gamma} \rbr{f\rbr{\cV_{D},\cV_{D},x}} }$. Then the ERC satisfies
\begin{align*}
\cR_m \rbr{\cG_{\gamma} \rbr{{\cF}^{\RN}_{D,\Jac}} } = \cO\rbr{ \min\cbr{ \frac{R \cdot B_{1:D}^{\Jac} }{\gamma}, b } \cdot \sqrt{\frac{D pq \cdot \log C^{\Net}}{m} } }.
\end{align*}
\end{corollary}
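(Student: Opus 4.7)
The plan is to follow the covering-number plus Lipschitz argument used to prove Theorem~\ref{thm:tighter_upperbd} and Corollary~\ref{cor:norm_ind}, and adapt it from feedforward DNNs to the ResNet architecture. The two new ingredients are (i) a block carries two weight matrices $U_d,V_d$ instead of one, and (ii) the identity skip changes how a perturbation of a weight matrix propagates forward.

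First I would establish a Lipschitz bound for $f(\cV_D,\cU_D,x)$ as a function of the $2D$ weight matrices $\{(U_d,V_d)\}_{d=1}^D$, for fixed $x \in \cX_m$. Perturbing $U_d \mapsto U_d + \Delta_U$ alters the output of the $d$-th block by at most $\nbr{V_d}_2 \cdot \nbr{\Delta_U}_2$ times the norm of the input to that block, which is bounded by $R \cdot B^{\Jac,x}_{1:(d-1)}$; the effect is then propagated forward by $J^x_{(d+1):D}$, of norm at most $B^{\Jac,x}_{(d+1):D}$. Perturbing $V_d \mapsto V_d + \Delta_V$ analogously contributes $\nbr{\Delta_V}_2 \cdot \nbr{U_d}_2 \cdot R \cdot B^{\Jac,x}_{1:(d-1)} \cdot B^{\Jac,x}_{(d+1):D}$. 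Summing over blocks and absorbing the uniform bounds gives a per-block Lipschitz constant of order $B^{\Jac}_{\backslash d}\,R\,(B_{V_d,2}+B_{U_d,2})$, exactly the quantity inside $C^{\Net}$.

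Next I would build an $\epsilon$-net for the $2D$ weight matrices in spectral norm. Since each $U_d \in \RR^{q\times p}$ and each $V_d \in \RR^{p\times q}$, the total parameter dimension is of order $Dpq$, and a standard volumetric bound yields a net of log-cardinality $\cO(Dpq\log(1/\epsilon))$. Composing this with the per-block Lipschitz estimate above and running the same chaining / Dudley-integral argument as in the proof of Theorem~\ref{thm:tighter_upperbd}, with the $\frac{1}{\gamma}$-Lipschitz loss $g_\gamma$, I obtain
\begin{align*}
\cR_m\rbr{\cG_\gamma\rbr{\cF^{\RN}_{D,\Jac}}} \;\lesssim\; \frac{R \cdot B_{1:D}^{\Jac}}{\gamma}\sqrt{\frac{Dpq\,\log C^{\Net}}{m}},
\end{align*}
where the logarithm in $C^{\Net}$ arises from optimally trading off $\epsilon$ against $\sup_{f,x} g_\gamma$, just as in Theorem~\ref{thm:tighter_upperbd}. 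To obtain the $\min\{\cdot,b\}$ form, I would repeat the argument of Corollary~\ref{cor:norm_ind}: since $\abr{g_\gamma}\le b$, the same covering argument applied directly to the loss class gives an alternative bound of order $b\sqrt{Dpq\log C^{\Net}/m}$, and taking the minimum yields the stated inequality.

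The main obstacle will be Step~1: the Lipschitz estimate must track how the \emph{Jacobian} $J^x_{i:j}$ itself responds to perturbations of $(U_d,V_d)$, since these Jacobians appear multiplicatively in the forward-propagation analysis. Because of the skip, the one-block Jacobian has the additive form $V_d\,\mathrm{diag}(\sigma')\,U_d + I$, so perturbation estimates produce sums of the form $B_{V_d,2}+B_{U_d,2}$ rather than a pure product, and this additive-plus-multiplicative accounting must be propagated carefully through the $D$-fold composition. Once this is made rigorous and yields the constant $C^{\Net}$ stated in the corollary, the remainder of the proof is a direct transcription of the DNN argument from Theorem~\ref{thm:tighter_upperbd} and Corollary~\ref{cor:norm_ind}.
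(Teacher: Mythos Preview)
Your proposal is correct and follows essentially the same route as the paper: a telescoping Lipschitz estimate in the weights (the paper's Lemma~\ref{lem:lipschitz_res_para}), fed into the covering-number/Dudley argument of Lemma~\ref{lem:cover_lip}, then combined with the bounded-loss step of Corollary~\ref{cor:norm_ind}. The per-block Lipschitz constant you derive, $B^{\Jac}_{\backslash d}\,R\,(B_{V_d,2}+B_{U_d,2})$, matches the paper's exactly, and the parameter count $h=\Theta(Dpq)$ is the right one.

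One clarification: the ``main obstacle'' you flag --- tracking how $J^x_{i:j}$ itself responds to perturbations of $(U_d,V_d)$ --- is not actually an obstacle. By the definition of $\cF^{\RN}_{D,\Jac}$ in \eqref{eqn_dnn:resnet_Dl}, the bounds $\nbr{J^x_{i:j}}_2\le B^{\Jac,x}_{i:j}$ hold \emph{uniformly} over all weight configurations in the class. In the telescoping sum, the outer layers $(d{+}1,\ldots,D)$ use one set of weights and the inner layers $(1,\ldots,d{-}1)$ use another, but both lie in the class, so the uniform Jacobian bounds apply directly; no perturbation analysis of the Jacobians is needed. The paper handles this by fixing the ReLU activation pattern of the outer layers (so the same linear map $J^x_{(d+1):D}$ acts on both inputs and the $1$-Lipschitz property of $\sigma$ controls the difference), then invoking the uniform bound. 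Once you note this, Step~1 is routine and the remainder is, as you say, a transcription of the feedforward case.
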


Compared with the $D$-layer networks without shortcuts in \eqref{eqn_dnn:nn_Dl}, ResNets have a stronger dependence on the input due to the skip-connection structure. In practice, the norm of the weight matrices in ResNets are usually significantly smaller than regular DNNs \cite{he2016deep}, which helps maintain relatively small $B_{1:D}^{\Jac}$ values. 

%

\subsection{Extension to Width-Change Operations}\label{sec:operation}

Changing the width for certain layers is a widely used operation, e.g., for CNNs and ResNets, which can be viewed as a linear transformation in many cases. In specific, we use $T_{d} \in \RR^{p_{d+1} \times p_{d}}$ to denote the operation to change the dimension from the $d$-th layer to the $(d+1)$-th layer as $f_{W_{d+1}}\rbr{x} =  \sigma \rbr{ W_{d+1} T_{d} x }$. 
Denote the set of layers with width changes by $\cI_T \subseteq [D]$. Combining with Theorem~\ref{thm:tighter_upperbd}, we have that the ERC satisfies $\cR_m \rbr{\cG_{\gamma} \rbr{\cF_{D,\Jac}} } = \tilde{\cO} \rbr{ \frac{R \cdot B_{1:D}^{\Jac} \cdot \note{\Pi_{t \in \cI_T} \nbr{T_{t}}_2 } \cdot \sqrt{Dpr}}{ \gamma \sqrt{m}} }$. Next, we illustrate several popular examples to show that $\Pi_{t \in \cI_T} \nbr{T_{t}}_2$ is a size independent constant. We refer to \cite{goodfellow2016deep} for more operations of changing the width.

\vspace{0.1in}

\noindent{\bf Width Expansion.} Two popular types of width expansion are padding and $1 \times 1$ convolution. Suppose $p_{d+1} = s \cdot p_{d}$ for some positive integer $s \geq 1$. Taking padding with 0 as an example, we have $\rbr{T_{d}}_{ij} = 1$ if $i=j s$, and $\rbr{T_{d}}_{ij} = 0$ otherwise for $T_{d} \in \RR^{sp_{d} \times p_{d}}$. This implies that $\nbr{T_{d}}_2 = 1$. 

For $1 \times 1$ convolution, suppose that the convolution features are $\cbr{c_1,\ldots,c_s}$. Then we expand width by performing entry-wise product using $s$ features respectively. This is equivalent to setting $T_{d} \in \RR^{sp_{d} \times p_{d}}$ with $\rbr{T_{d}}_{ij} = c_k$ if $i=j + (k-1) s$ for $k \in [s]$, and $\rbr{T_{d}}_{ij} = 0$ otherwise. It implies that $ \nbr{T_{d}}_2 = \sqrt{\sum_{i=1}^{s} c_i^2} \leq 1$ when $\sum_{i=1}^{s} c_i^2 \leq 1$.

\vspace{0.1in}

\noindent{\bf Width Reduction.} Two popular types of width reduction are average pooling and max pooling. Suppose $p_{d+1} = \frac{p_{d}}{s}$ is an integer. For average pooling, we pool each nonoverlapping $s$ features into one feature. This implies $T_{d} \in \RR^{\frac{p_{d}}{s} \times p_{d}}$ with $\rbr{T_{d}}_{ij} ={1}/{s}$ if $j=(i-1) s + k$ for $k \in [s]$, and $\rbr{T_{d}}_{ij} = 0$ otherwise. Then we have $\nbr{T_{d}}_2 = \sqrt{{1}/{s}} $. 

For max pooling, we choose the largest entry in each nonoverlapping feature segment of length $s$. Denote $I_s = \cbr{(i-1)\times s +1, \ldots,i \cdot s}$. This implies $T_{d} \in \RR^{\frac{p_{d}}{s} \times p_{d}}$ with $\rbr{T_{d}}_{ij} = 1$ if $ | (x^{\cbr{d}})_j | \geq | (x^{\cbr{d}})_k|~\forall ~k \in I_s,k \neq j $, and $\rbr{T_{d}}_{ij} = 0$ otherwise. This implies that $\nbr{T_{d}}_2 = 1$. For pooling with overlapping features, similar results hold.

\section{Numerical Evaluation}\label{sec:exp_all}


To better illustrate the difference between our result and existing ones, we demonstrate numerical results in Figure~\ref{fig:compare} using real data. In specific, we train a simplified VGG19-net \cite{simonyan2014very} using $3 \times 3$ convolution filters (with unit norm constraints) on the CIFAR-10 dataset \cite{krizhevsky2009learning}. 

$\bullet$ \noindent{\bf Comparison of Bounds.} We first compare with the capacity terms in \cite{bartlett2017spectrally} (Bound1), \cite{neyshabur2017pac} (Bound2), and \cite{golowich2017size} (Bound3) by ignoring the common factor $\frac{R}{\gamma\sqrt{m}}$ as follows:

\begin{itemize}
\item Ours: $\Pi_{d=1}^{D} B_{d,2} \sqrt{k \sum_{d=1}^{D} n_{d}}$;
\item Bound1: $\Pi_{d=1}^{D} B_{d,2} \rbr{\sum_{d=1}^{D} \frac{B_{d,2\rightarrow 1}^{{2}/{3}}}{B_{d,2}^{{2}/{3}}} }^{3/2}$;
\item Bound2: $\Pi_{d=1}^{D} B_{d,2} \sqrt{D^2 p \sum_{d=1}^{D} \frac{p_d B_{d,\Fr}^{2}}{B_{d,2}^{2} } }$; 
\item Bound3: $\Pi_{d=1}^{D} B_{d,\Fr} \sqrt{D}$.
\end{itemize}

Note that we use the upper bound $\Pi_{d=1}^{D} B_{d,2}$ rather than $B_{1:D}^{\Jac}$ to make the comparison of dimension dependence more explicit. Further comparison of $\Pi_{d=1}^{D} B_{d,2}$ and $B_{1:D}^{\Jac}$ are in the next experiment. Since we may have more filters $n_{d}$ than their dimension $k$, we do not assume orthogonality here. Thus we simply use the upper bounds of norms $B_{d}$ rather than the form as in Table~\ref{table:compare_cnn}. Following the analysis of Theorem~\ref{thm:tighter_upperbd}, we have \[\sqrt{k \sum_{d=1}^{D} n_{d}}\] dependence rather than \[\sqrt{D pr}\] as \[k \sum_{d=1}^{D} n_{d}\] is the total number free parameter for CNNs, where $n_{d}$ is the number of filters at $d$-th layer. Also note that we ignore the logarithms factors in all bounds for simplicity and their empirical values are small constants compared with the the dominating terms.

\begin{figure}[t!]
\begin{center}
\begin{tabular}{ccc}
\includegraphics[width=0.28\textwidth]{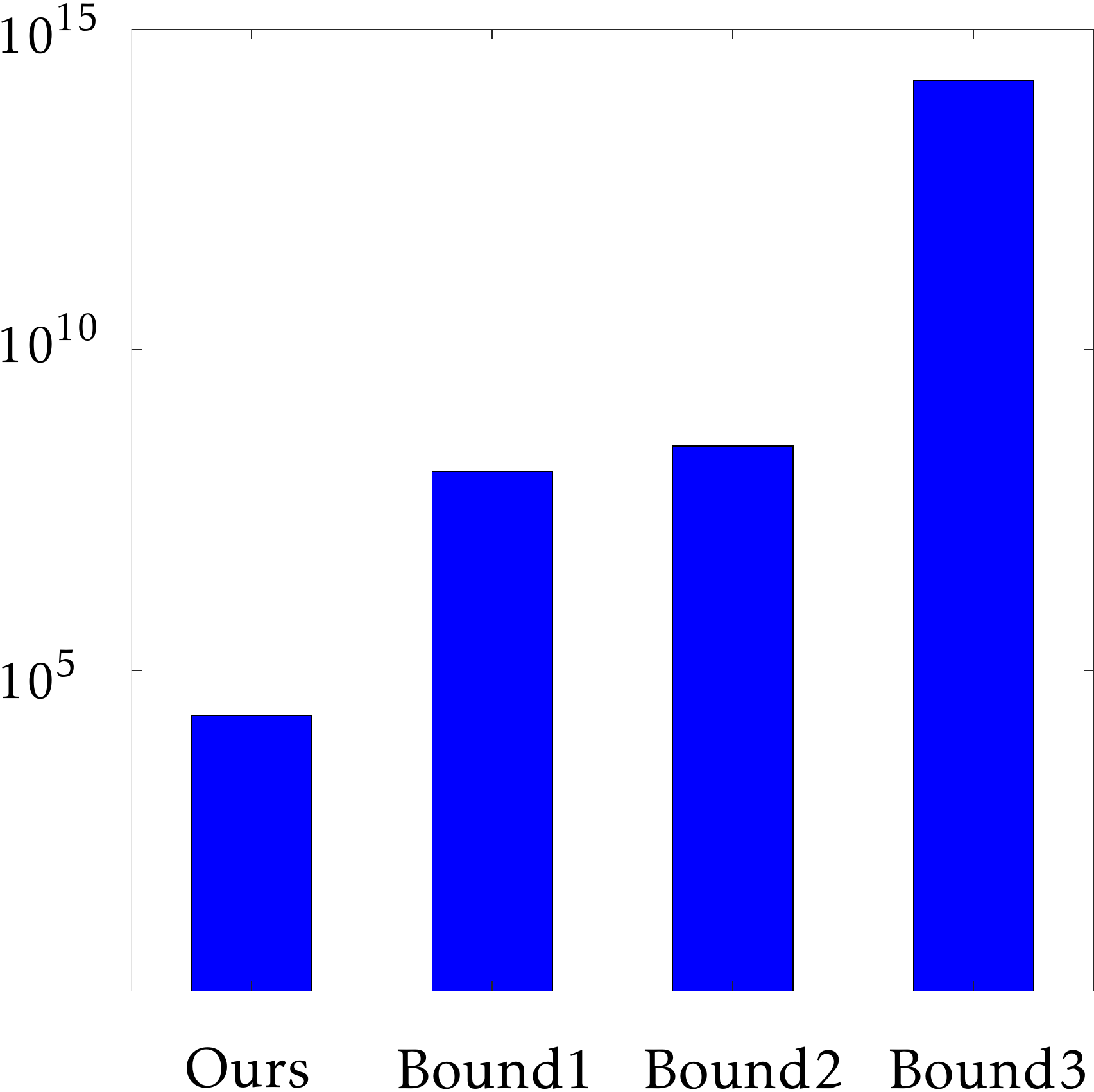} & \hspace{0.08in}\includegraphics[width=0.258\textwidth]{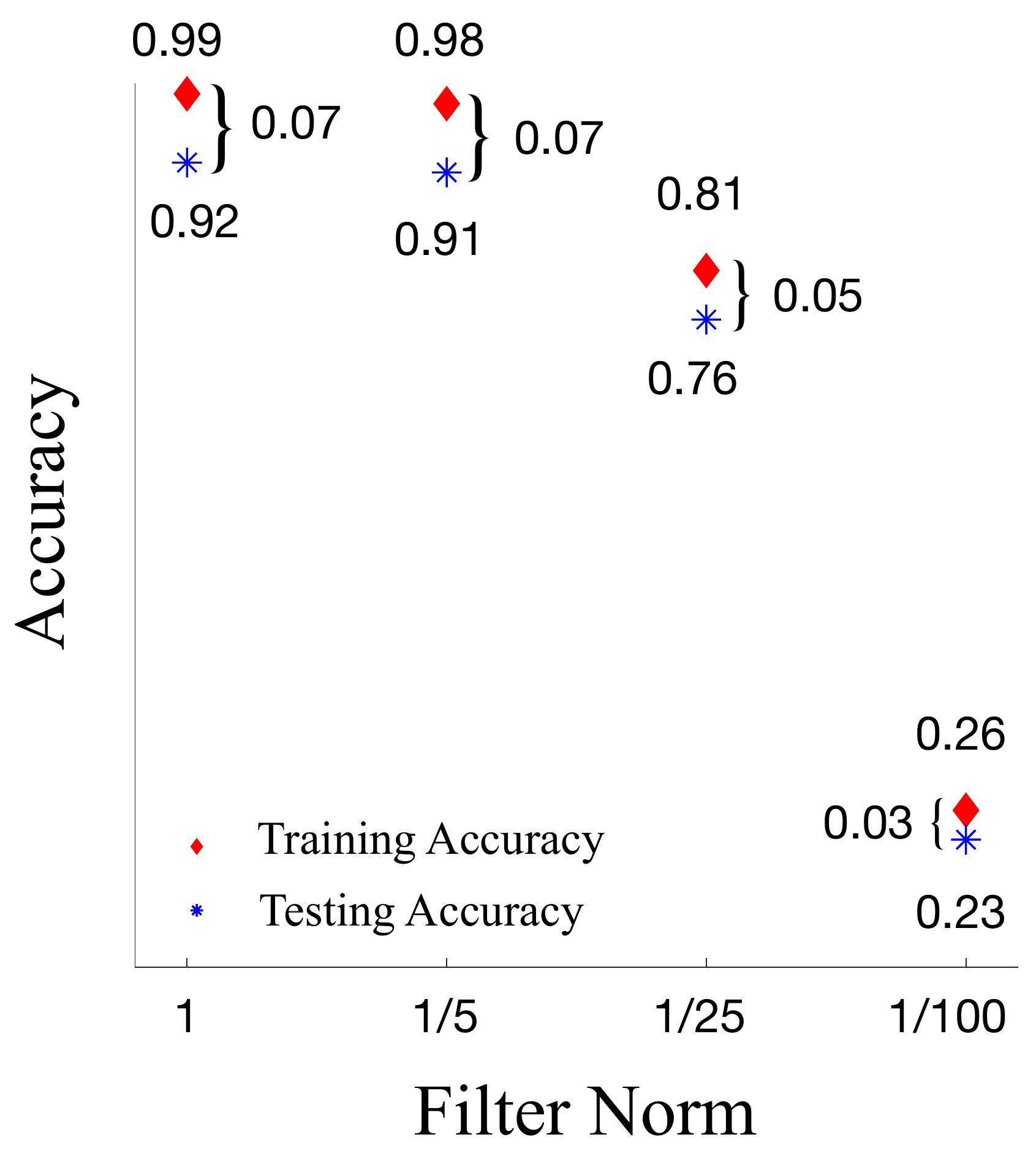} &
\hspace{0.08in}\includegraphics[width=0.32\textwidth]{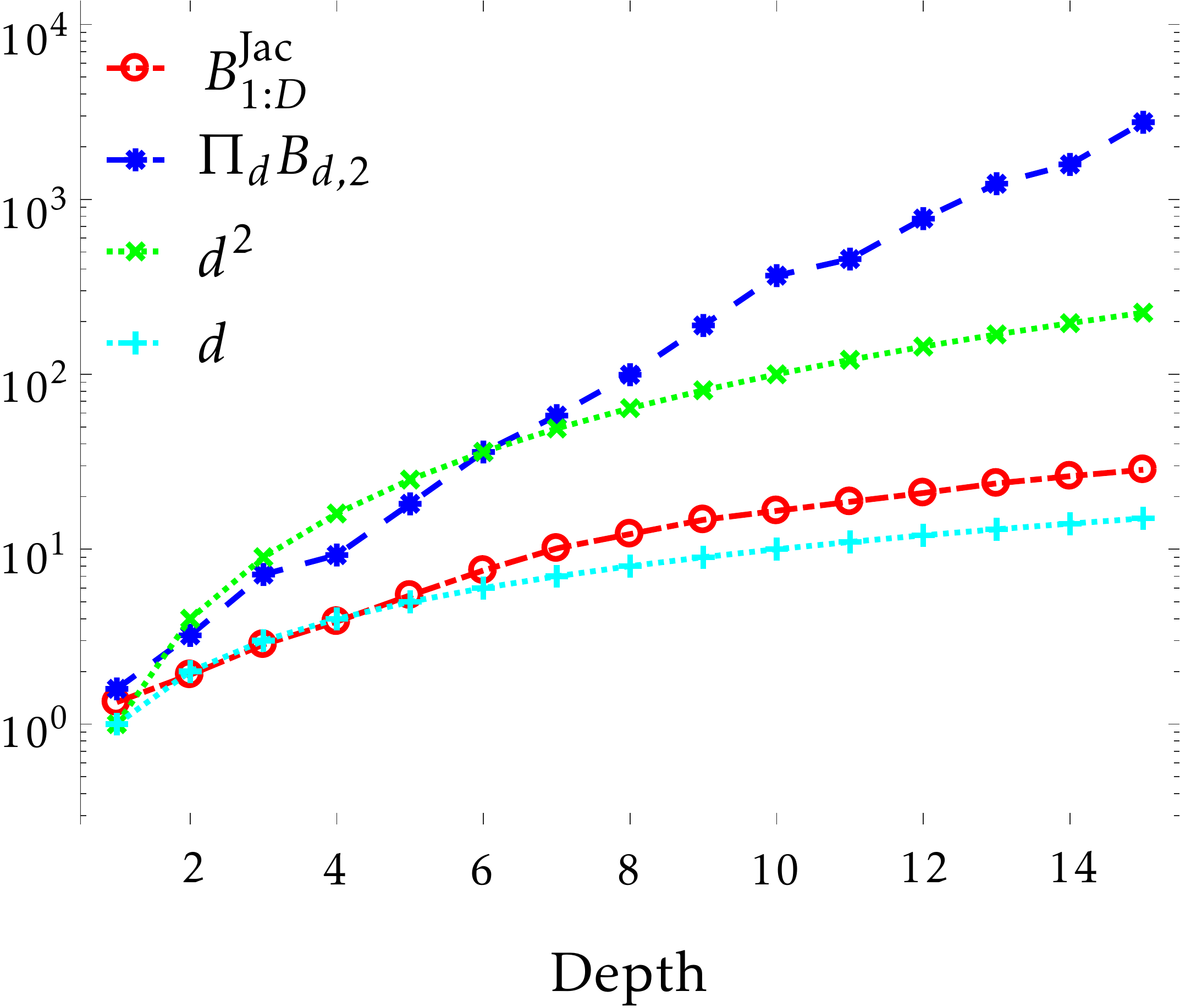} \\
\footnotesize (a) & \footnotesize (b) & \footnotesize (c)
\end{tabular}
\end{center}
\caption{Panel (a) shows comparison results for the same VGG19 network trained on CIFAR10 using unit norm filters. The vertical axis the corresponding bounds in the logarithmic scale. Panel (b) shows the training accuracy (red diamond), testing accuracy (blue cross), and the empirical generalization error using different scales of the filters listed on the horizontal axes.  (c) Comparison results for the dependence of $B^{\Jac}_{1:D}$ and $\prod_{d} B_{d,2}$ on depth, where the horizontal axes is the depth and the vertical axes is the values of corresponding quantities in the logarithmic scale.}\label{fig:compare}
\end{figure}

For the same network and corresponding weight matrices, we see from Figure~\ref{fig:compare} (a) that our result ($10^4 \sim 10^5$) is significantly smaller than \cite{bartlett2017spectrally,neyshabur2017pac} ($10^8 \sim 10^9$) and \cite{golowich2017size}  ($10^{14} \sim 10^{15}$), even we use $\Pi_{d=1}^{D} B_{d,2}$ instead of $B_{1:D}^{\Jac}$. As we have discussed, our bound benefits from tighter dependence on the dimensions. Note that \[k \sum_{d=1}^{D} n_{d}\] is approximately of order \[Dk^2\], which is significantly smaller than \[ \big(\sum_{d=1}^{D} {B_{d,2\rightarrow 1}^{{2}/{3}}}/{B_{d,2}^{{2}/{3}}} \big)^3 \] in \cite{bartlett2017spectrally} and \[ D^2 p \sum_{d=1}^{D} {p_d B_{d,\Fr}^{2}}/{B_{d,2}^{2} } \] in \cite{neyshabur2017pac} (both are of order $D^3 pr$). In addition, this verifies that spectral dependence is significantly tighter than Frobenius norm dependence in \cite{golowich2017size}. Further, we show the training accuracy, testing accuracy, and the empirical generalization error using different scales on the norm of the filters in Figure~\ref{fig:compare} (b). We see that the generalization errors decrease when the norm of filters decreases. However, note that when the norms are too small, the accuracies drop significantly due to a potentially much smaller parameter space. Thus, the scales (norms) of the weight matrices should be nether too large (induce large generalization error) nor too small (induce low accuracy) and choosing proper scales is important in practice as existing works have shown. On the other hand, this also support our claim that when the Frobenius norm based bound attains the same order with the spectral norm based bound, the latter can achieve better training/testing performance.

We want to remark that all numerical evaluations are empirical estimation of the generalization bounds, rather than their exact values. This is because all existing bounds requires to take uniform bounds of some quantities on parameters or the supremum value over the entire space, which is empirically not accessible. For example, in the case that when it involve the upper/lower bound of quantities (norm, rank, or other parameters) depending on weight matrices, theoretically we should take the values of their upper/lower bounds (this leads to worse empirical bounds) rather than estimating them from the training process; or in the case that the bounds involve some quantities depending on the supremum over the entire parameter space, numerical evaluations cannot exhaust the entire parameter space to reach the supremum \cite{bartlett2017spectrally,golowich2017size,neyshabur2015norm,neyshabur2017pac,zhou2018understanding,arora2018stronger}. Similarly, we did not calculate the optimal value of $\gamma$ since it is computational expensive, where the optimal $\gamma$ scales with the $\prod_{d} B_d$ and balances the quantities on the R.H.S. of \eqref{eqn_dnn:gen_err_cf}. Our experiments here cannot avoid such restrictions, but the comparison is fair across various bounds as they are obtained from the same training process. 

\begin{figure}[t!]
\begin{center}
\begin{tabular}{cc}
\includegraphics[width=0.29\textwidth]{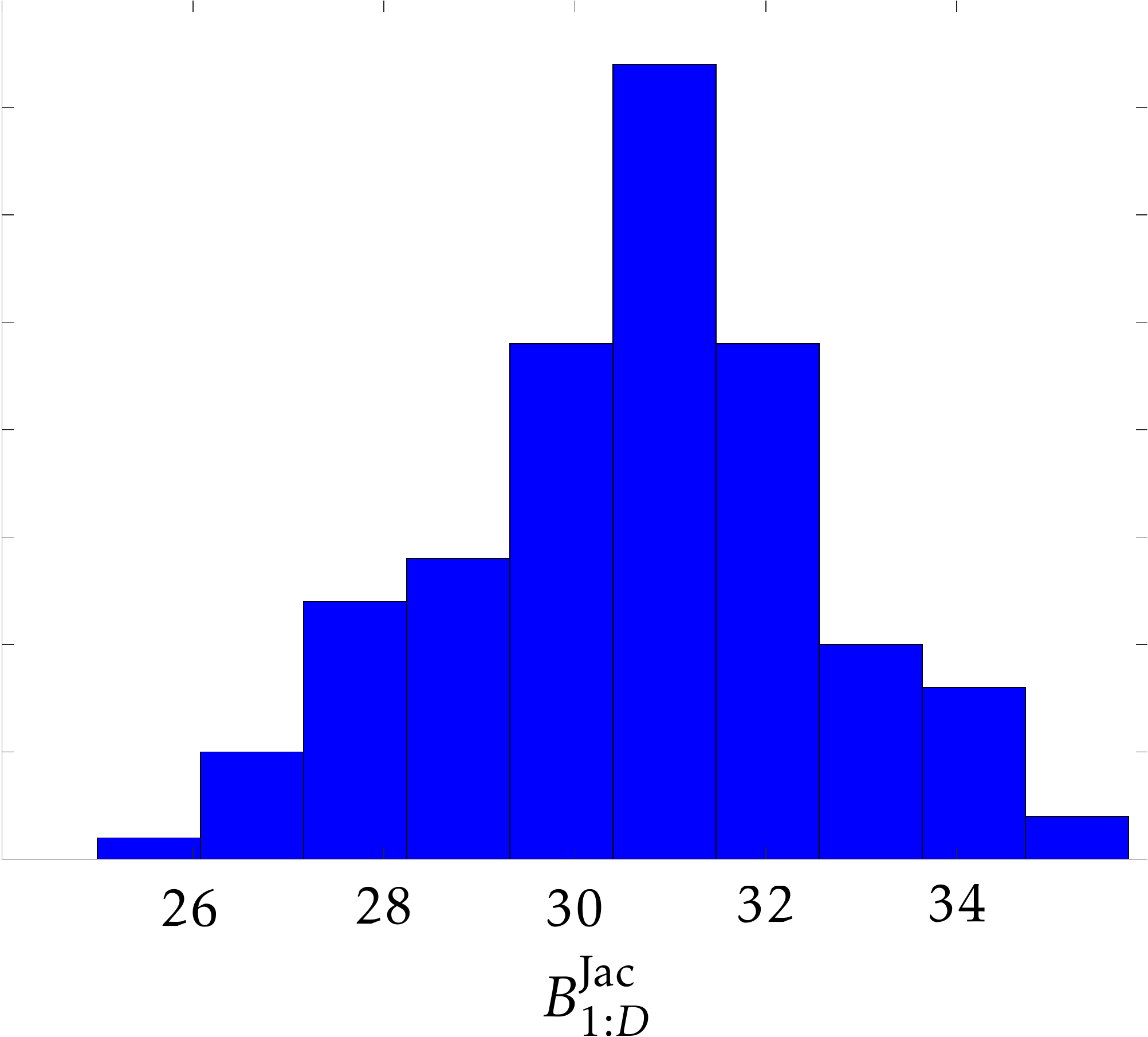} & \hspace{0.18in}\includegraphics[width=0.29\textwidth]{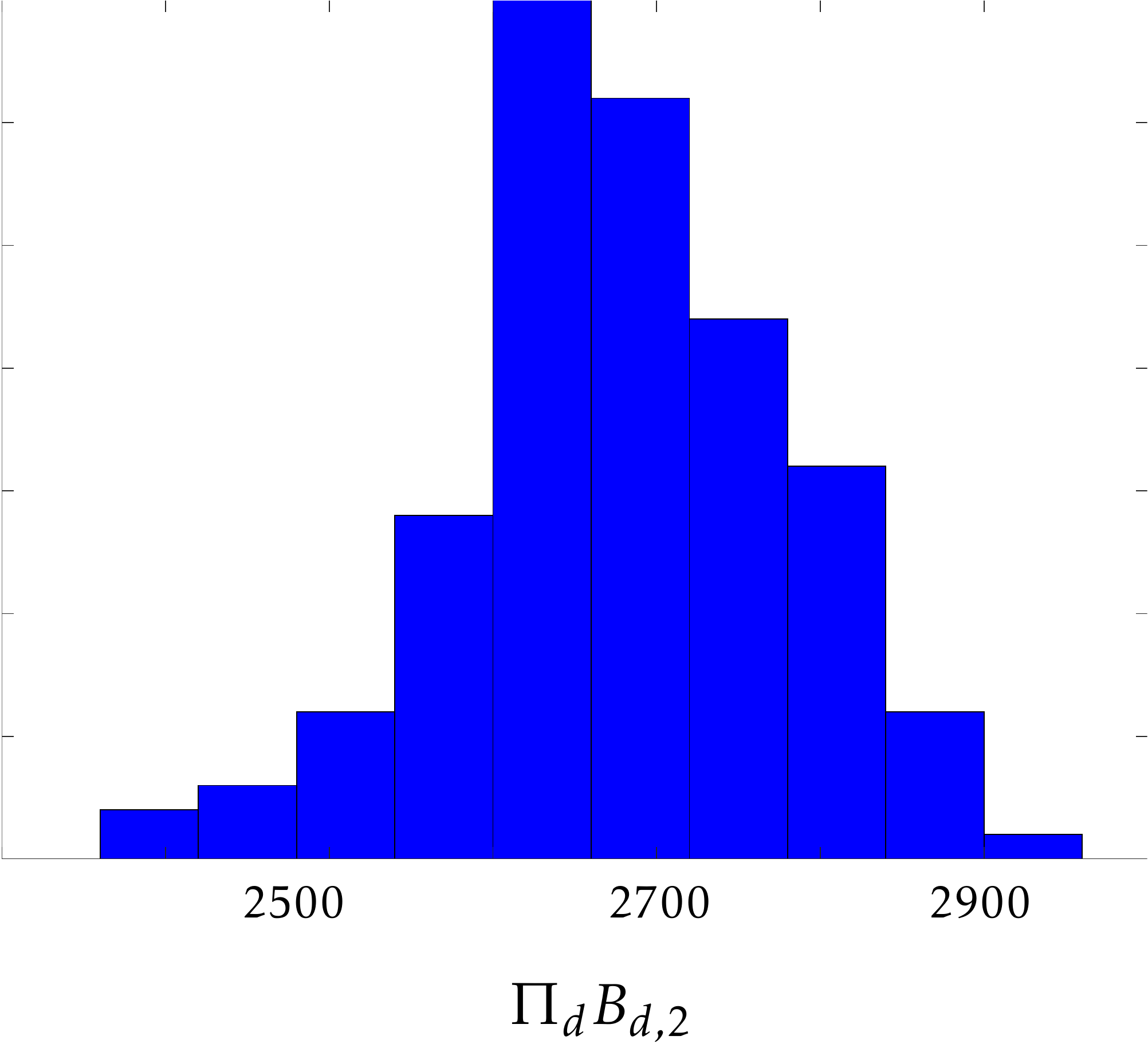} \\
	\footnotesize (a) & \footnotesize (b)
\end{tabular}
\end{center}
\caption{Panel (a) \& (b) Empirical distribution of $B^{\Jac}_{1:D}$ and $\prod_{d} B_{d,2}$ for the same VGG19 network trained on CIFAR10, where the horizontal axes is the empirical values of $B^{\Jac}_{1:D}$ and $\prod_{d} B_{d,2}$, respectively.}\label{fig:compare2}
\end{figure}

$\bullet$ \noindent{\bf Dependence of $B^{\Jac}_{1:D}$ and $\prod_{d} B_{d,2}$ on Depth}. We then provide an empirical evaluation to see how strong the quantities $B^{\Jac}_{1:D}$ and $\prod_{d} B_{d,2}$ depend on the depth. Note that we use $d$ as the variable for depth. Using the same setting as above, we provide the magnitude of $\log B^{\Jac}_{1:D}$ and $\log \prod_{d} B_{d,2}$ in Figure~\ref{fig:compare} (c). We also provide the plots for $\log d$ and $\log d^2$ as reference. We can observe that $\log \prod_{d} B_{d,2}$ has an approximately linear dependence on the depth, which matches with our intuition. In terms of $\log B^{\Jac}_{1:D}$, we can see that it has a significantly slower increasing rate than $\log \prod_{d} B_{d,2}$. Compared with the reference plots $\log d$ and $\log d^2$, we can observe even a slower rate than $\log d^2$. This further indicates that $\log B^{\Jac}_{1:D}$ may has a dependence slower than some low degree of poly$(d)$.


$\bullet$ \noindent{\bf Comparison between $B^{\Jac}_{1:D}$ and $\prod_{d} B_{d,2}$}. We demonstrate the empirical difference between $B^{\Jac}_{1:D}$ and $\prod_{d} B_{d,2}$. Using the same setting of the network and dataset as above, we provide the empirical distribution of $B^{\Jac}_{1:D}$ and $\prod_{d} B_{d,2}$ over the training set using different random initializations of weight matrices, which is provided in Figure~\ref{fig:compare2} (a) and (b). We can observe that the values of $B^{\Jac}_{1:D}$ are approximately 2 orders smaller than the values of $\prod_{d} B_{d,2}$, which support our claim that $B^{\Jac}_{1:D}$ is a significantly tighter quantification then $\prod_{d} B_{d,2}$.

%

\bibliographystyle{abbrv}
\bibliography{capacity.bib}

\newpage

\appendix

\section{Proof of Theorem~\ref{thm:tighter_upperbd}}\label{pf:thm:tighter_upperbd}

We start with some definitions of notations. Given a vector $x \in \RR^p$, we denote $x_i$ as the $i$-th entry, and $x_{i:j}$ as a sub-vector indexed from $i$-th to $j$-th entries of $x$. Given a matrix $A \in \RR^{n \times m}$, we denote $A_{ij}$ as the entry corresponding to $i$-th row and $j$-th column, $A_{i*}$ ($A_{*i}$) as the $i$-th row (column), $A_{\cI_1 \cI_2}$ as a submatrix of $A$ indexed by the set of rows $\cI_1 \subseteq [n]$ and columns $\cI_2 \subseteq [m]$.
Given two real values $a,b \in \RR^+$, we write $a \lesssim(\gtrsim) b$ if $a \leq(\geq) c b$ for some generic constant $c>0$. 

Our analysis is based on the characterization of the Lipschitz property of a given function on both input and parameters. Such an idea can potentially provide tighter bound on the model capacity in terms of these Lipschitz constants and the number of free parameters, including other architectures of DNNs. 


We first provide an upper bound for the Lipschitz constant of $f\rbr{\cW_{D},x}$ in terms of the parameters $\cW_{D}$.

\begin{lemma}\label{lem:lipschitz_para}
	Given any $x \in \RR^{p_0}$ satisfying $\norm{x}_2 \leq R$, for any $f\rbr{\cW_{D},x},f\rbr{\tilde{\cW}_{D},x} \in \cF_{D,\Jac}$ with $\cW_{D} = \cbr{W_d}_{d=1}^D$ and $\tilde{\cW}_{D} = \cbr{\tilde{W}_d}_{d=1}^D$, where $W_d = U_d V_d^\top$ and $\tilde{W}_d = \tilde{U}_d \tilde{V}_d^\top$, $U_d, V_d, \tilde{U}_d, \tilde{V}_d \in \RR^{p \times r}$ and $\nbr{U}_2 = \nbr{V}_2 = \nbr{W_d}_2^{1/2}$, $\nbr{\tilde{U}}_2 = \nbr{\tilde{V}}_2 = \nbr{\tilde{W}_d}_2^{1/2}$, and denote $B^{\Jac, x}_{\backslash d} = \max_{d \in [D]} B^{\Jac, x}_{1:(d-1),2} B^{\Jac, x}_{(d+1):D,2}$, then we have
	\begin{align*}
		\nbr{f\rbr{\cW_{D},x} - f\rbr{\tilde{\cW}_{D},x}}_2 \leq B^{\Jac, x}_{\backslash d} \cdot R \sqrt{2D} \cdot \max_{d} B_{d,2}^{1/2} \sqrt{\sum_{d=1}^{D} \nbr{V - \tilde{V}}_{\Fr}^2 + \nbr{U - \tilde{U}}_{\Fr}^2 }.
	\end{align*}
\end{lemma}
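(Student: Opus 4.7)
The plan is to prove the bound via a layer-by-layer hybrid (telescoping) argument, then convert the resulting single-layer weight perturbations into the factor-matrix form using the low-rank decompositions $W_d=U_dV_d^\top$ and $\tilde W_d=\tilde U_d\tilde V_d^\top$. Concretely, I would introduce hybrid networks $\hat f_d$ that use $\cW$ for the first $d$ layers and $\tilde{\cW}$ for the remaining $D-d$ layers, so that $\hat f_0(x)=f(\tilde{\cW}_D,x)$ and $\hat f_D(x)=f(\cW_D,x)$, and telescope $f(\cW_D,x)-f(\tilde{\cW}_D,x)=\sum_{d=1}^{D}\bigl(\hat f_d(x)-\hat f_{d-1}(x)\bigr)$. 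Consecutive hybrids differ only in the weight matrix at layer $d$, isolating the effect of perturbing a single layer at a time.

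For the $d$-th term of the telescope I would combine three ingredients. First, the activation $a_{d-1}$ fed into layer $d$ of both hybrids is the output of the first $d-1$ layers (with weights $W_1,\dots,W_{d-1}$) applied to $x$, and by ReLU positive homogeneity $a_{d-1}=J^x_{1:(d-1)}x$, so $\nbr{a_{d-1}}_2\le B^{\Jac,x}_{1:(d-1),2}\cdot R$. Second, ReLU is $1$-Lipschitz, giving $\nbr{\sigma(W_d a_{d-1})-\sigma(\tilde W_d a_{d-1})}_2\le\nbr{(W_d-\tilde W_d)a_{d-1}}_2\le\nbr{W_d-\tilde W_d}_2\nbr{a_{d-1}}_2$. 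Third, the forward propagation through layers $d+1,\dots,D$ (using $\tilde{\cW}$) amplifies this perturbation by at most the operator-norm bound $B^{\Jac,x}_{(d+1):D,2}$ on the tail Jacobian. Combining these and summing over $d$ yields $\nbr{f(\cW_D,x)-f(\tilde{\cW}_D,x)}_2\le B^{\Jac,x}_{\backslash d}\cdot R\cdot\sum_{d=1}^{D}\nbr{W_d-\tilde W_d}_2$.

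Next I would pass from spectral-norm differences of $W_d$ to Frobenius-norm differences of $U_d,V_d$ by writing $W_d-\tilde W_d=(U_d-\tilde U_d)V_d^\top+\tilde U_d(V_d-\tilde V_d)^\top$ and applying $\nbr{AB}_2\le\nbr{A}_\Fr\nbr{B}_2$, together with the hypotheses $\nbr{V_d}_2=\nbr{W_d}_2^{1/2}\le\max_d B_{d,2}^{1/2}$ and $\nbr{\tilde U_d}_2\le\max_d B_{d,2}^{1/2}$. This yields $\nbr{W_d-\tilde W_d}_2\le\max_d B_{d,2}^{1/2}\bigl(\nbr{U_d-\tilde U_d}_\Fr+\nbr{V_d-\tilde V_d}_\Fr\bigr)$. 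Summing over $d$ and invoking Cauchy--Schwarz on the resulting $2D$ nonnegative terms produces the $\sqrt{2D}$ factor and the claimed $\sqrt{\sum_{d}\nbr{U_d-\tilde U_d}_\Fr^2+\nbr{V_d-\tilde V_d}_\Fr^2}$ form.

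The delicate step will be the tail-Jacobian bound for the hybrid network, since the activation patterns at layers $d+1,\dots,D$ of $\hat f_d$ are determined by inputs that have been pushed through a mixture of $\cW$ and $\tilde{\cW}$ and differ in general from those of either original network. I would justify using $B^{\Jac,x}_{(d+1):D,2}$ by interpreting $\sup_{\cW_D}\nbr{J^x_{i,j}}_2\le B^{\Jac,x}_{i:j}$ in the definition of $\cF_{D,\Jac}$ as a uniform bound over all admissible weight configurations, so that each hybrid is covered; alternatively, one can appeal to the standard fact that the Lipschitz constant of a ReLU sub-network equals the supremum of the operator norm of its Jacobian over all activation patterns, which is again controlled by $B^{\Jac,x}_{(d+1):D,2}$. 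Once this point is handled, the remaining calculations are the routine telescoping, matrix factorization, and Cauchy--Schwarz steps outlined above.
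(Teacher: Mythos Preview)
Your proposal is correct and follows essentially the same route as the paper: a layer-by-layer telescoping of hybrid networks, then bounding each summand via the head Jacobian bound $B^{\Jac,x}_{1:(d-1)}$, the $1$-Lipschitz property of $\sigma$, and the tail Jacobian bound $B^{\Jac,x}_{(d+1):D}$, followed by the factorization $W_d-\tilde W_d=U_d(V_d-\tilde V_d)^\top+(U_d-\tilde U_d)\tilde V_d^\top$ and Cauchy--Schwarz to produce the $\sqrt{2D}$ factor. The only cosmetic difference is that the paper orders its hybrids with $\tilde\cW$ in the head and $\cW$ in the tail (the reverse of yours) and justifies the tail bound by ``fixing the activation function output'' rather than by invoking the $\sup_{\cW_D}$ in the definition of $\cF_{D,\Jac}$, but the argument and the resulting inequality are the same.
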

\begin{proof}
	Given $x$ and two sets of weight matrices $\cbr{W_{d}}_{d=1}^D$, $\cbr{\tilde{W}_{d}}_{d=1}^D$, we have
	\begin{align}
		&\nbr{f_{W_{D}} \rbr{f_{W_{D-1}}\rbr{ \cdots f_{W_{1}} \rbr{x} } } - f_{\tilde{W}_{D}} \rbr{f_{\tilde{W}_{D-1}} \rbr{\cdots f_{\tilde{W}_{1}}\rbr{x} } } }_2 \nonumber \\
		&\overset{(i)}{=} \nbr{ \sum_{d=1}^{D} f_{W_{D}} \rbr{\cdots f_{W_{d+1}}\rbr{ f_{\tilde{W}_{d}}\rbr{ \cdots f_{\tilde{W}_{1}} \rbr{x} } } } - f_{W_{D}} \rbr{\cdots f_{W_{d}}\rbr{ f_{\tilde{W}_{d-1}}\rbr{ \cdots f_{\tilde{W}_{1}} \rbr{x} } } } }_2 \nonumber\\
		&\leq \sum_{d=1}^{D} \nbr{ f_{W_{D}} \rbr{\cdots f_{W_{d+1}}\rbr{ f_{\tilde{W}_{d}}\rbr{ \cdots f_{\tilde{W}_{1}} \rbr{x} } } } - f_{W_{D}} \rbr{\cdots f_{W_{d}}\rbr{ f_{\tilde{W}_{d-1}}\rbr{ \cdots f_{\tilde{W}_{1}} \rbr{x} } } } }_2 \nonumber\\
		&\overset{(ii)}{=} \sum_{d=1}^{D} \nbr{ J_{(d+1):D}^x \cdot f_{\tilde{W}_{d}}\rbr{ \cdots f_{\tilde{W}_{1}} \rbr{x} } - J_{(d+1):D}^x \cdot f_{W_{d}}\rbr{ f_{\tilde{W}_{d-1}}\rbr{ \cdots f_{\tilde{W}_{1}} \rbr{x} } } }_2 \nonumber\\
		&\overset{(iii)}{\leq} \sum_{d=1}^{D} B^{\Jac, x}_{(d+1):D} \cdot \nbr{\tilde{W}_d f_{\tilde{W}_{d-1}}\rbr{ \cdots f_{\tilde{W}_{1}} \rbr{x} } - W_d f_{\tilde{W}_{d-1}}\rbr{ \cdots f_{\tilde{W}_{1}} \rbr{x} } }_2 \nonumber \\
		&\leq \sum_{d=1}^{D} B^{\Jac, x}_{(d+1):D} \cdot \nbr{W_{d} - \tilde{W}_{d}}_{2} \cdot \nbr{f_{\tilde{W}_{d-1}}\rbr{ \cdots f_{\tilde{W}_{1}} \rbr{x} } }_2, \label{eqn:lip_w_1}
	\end{align}
	where $(i)$ is derived from adding and subtracting intermediate neural network functions recurrently, where $f_{W_{D}} \rbr{\cdots f_{W_{d+1}}\rbr{ f_{\tilde{W}_{d}}\rbr{ \cdots f_{\tilde{W}_{1}} \rbr{x} } } }$ share the same output of activation functions from $d+1$-th layer to $D$-the layer with $f_{W_{D}} \rbr{f_{W_{D-1}}\rbr{ \cdots f_{W_{1}} \rbr{x} } }$, $(ii)$ is from fixing the activation function output, and $(iii)$ is from the entry-wise $1$--Lipschitz continuity of $\sigma(\cdot)$. On the other hand, for any $d \in [D]$, we further have
	\begin{align}
		\nbr{ f_{W_{d}}\rbr{ \cdots f_{W_{1}} \rbr{x} } }_2 &= \nbr{ J_{1:d}^x \cdot x }_2 \leq B^{\Jac, x}_{1:d} \cdot \nbr{x}_2. \label{eqn:spec_f_d}
	\end{align}
	where $(i)$ is from the entry-wise $1$--Lipschitz continuity of $\sigma(\cdot)$ and $(ii)$ is from recursively applying the same argument. 
	
	In addition, since $W_d = U_d V_d^\top$ and $\tilde{W}_d = \tilde{U}_d \tilde{V}_d^\top$, where $U_d, V_d, \tilde{U}_d, \tilde{V}_d \in \RR^{p \times r}$ and $\nbr{U}_2 = \nbr{V}_2 = \nbr{\tilde{U}}_2 = \nbr{\tilde{V}}_2 = \nbr{W_d}_2^{1/2}$. Then we have
	\begin{align}
		\nbr{W_{d} - \tilde{W}_{d}}_{2} &= \nbr{U_d V_d^\top - \tilde{U}_d \tilde{V}_d^\top}_2  \nonumber \\
		&= \nbr{U_d V_d^\top - U_d \tilde{V}_d^\top + U_d \tilde{V}_d^\top - \tilde{U}_d \tilde{V}_d^\top}_2 \nonumber \\
		&\leq \nbr{U}_2 \nbr{V - \tilde{V}}_2 + \nbr{\tilde{V}}_2 \nbr{U - \tilde{U}}_2  \nonumber \\
		&\leq \nbr{W_d}_2^{1/2} \rbr{\nbr{V - \tilde{V}}_{\Fr} + \nbr{U - \tilde{U}}_{\Fr} }. \label{eqn:factor}
	\end{align}
	
	Applying \eqref{eqn:lip_w_1} recursively and combining \eqref{eqn:spec_f_d} and \eqref{eqn:factor}, we obtain the desired result as
	\begin{align*}
		&\nbr{f_{W_{D}} \rbr{f_{W_{D-1}}\rbr{ \cdots f_{W_{1}} \rbr{x} } } - f_{\tilde{W}_{D}} \rbr{f_{\tilde{W}_{D-1}} \rbr{\cdots f_{\tilde{W}_{1}}\rbr{x} } } }_{2} \\
		&\leq \sum_{d=1}^{D} B^{\Jac, x}_{(d+1):D} \cdot B^{\Jac, x}_{1:(d-1)} \cdot \nbr{x}_2 \cdot \nbr{W_d}_2^{1/2} \rbr{\nbr{V - \tilde{V}}_{\Fr} + \nbr{U - \tilde{U}}_{\Fr} } \\
		&\leq B^{\Jac, x}_{\backslash d} \cdot R \sqrt{D} \cdot \max_{d} \nbr{W_d}_2^{1/2} \sum_{d=1}^{D} \rbr{\nbr{V - \tilde{V}}_{\Fr} + \nbr{U - \tilde{U}}_{\Fr} }\\
		&\leq B^{\Jac, x}_{\backslash d} \cdot R \sqrt{2D} \cdot \max_{d} \nbr{W_d}_2^{1/2} \sqrt{\sum_{d=1}^{D} \nbr{V - \tilde{V}}_{\Fr}^2 + \nbr{U - \tilde{U}}_{\Fr}^2 }.
	\end{align*}
\end{proof}

\begin{lemma}\label{lem:cover_lip}
	Suppose $g(w,x)$ is $L_w$-Lipschitz over $w \in \RR^h$ with $\norm{w}_2\leq K$ and $\alpha = \sup_{g \in \cG,x\in \cX_{m}} \abr{g(w,x)}$. Then the ERC of $\cG = \cbr{g(w,x)}$ satisfies 
	\begin{align*}
		\cR_m \rbr{\cG} = \cO \rbr{\frac{\alpha \sqrt{h \log \frac{KL_w \sqrt{m}}{\alpha \sqrt{h}}}}{\sqrt{m}}}.
	\end{align*}
\end{lemma}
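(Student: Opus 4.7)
The plan is to combine a parameter-space covering with Massart's finite-class Rademacher bound, then optimize the cover scale. I would first build an $\epsilon$-net $\cC_\epsilon$ of the ball $\cbr{w \in \RR^h : \nbr{w}_2 \leq K}$ with $\abr{\cC_\epsilon} \leq (3K/\epsilon)^h$ via the standard volumetric bound. The $L_w$-Lipschitz property of $g$ in $w$ then implies that for every feasible $w$ there is a $w' \in \cC_\epsilon$ with $\abr{g(w,x) - g(w',x)} \leq L_w \epsilon$ uniformly over $x \in \cX_m$, so that, inside the supremum defining the empirical Rademacher complexity, every $g(w,\cdot)$ can be replaced by its nearest representative at an additive cost of $L_w \epsilon$. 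This yields
\begin{align*}
\cR_m(\cG) \leq L_w \epsilon + \cR_m(\cG_\epsilon), \qquad \cG_\epsilon := \cbr{g(w',\cdot) : w' \in \cC_\epsilon}.
\end{align*}

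Since $\cG_\epsilon$ is finite and every $g \in \cG$ is bounded by $\alpha$ in absolute value, Massart's finite-class lemma gives
\begin{align*}
\cR_m(\cG_\epsilon) \leq \alpha \sqrt{\frac{2 \log \abr{\cC_\epsilon}}{m}} \leq \alpha \sqrt{\frac{2 h \log(3K/\epsilon)}{m}}.
\end{align*}
Combining the two inequalities and choosing $\epsilon = \alpha \sqrt{h/m}/L_w$ balances the linear discretization error against the logarithmic covering term: the first summand becomes $\alpha \sqrt{h/m}$, while the argument of the logarithm in the second summand becomes $3 K L_w \sqrt{m}/(\alpha \sqrt{h})$. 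Adding the two pieces produces the stated rate $\cO\bigl(\alpha \sqrt{h \log(K L_w \sqrt{m}/(\alpha \sqrt{h}))/m}\bigr)$.

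The only genuine subtlety is that the optimized $\epsilon$ must lie in an admissible range for the covering number estimate to be nontrivial, i.e.\ we need $K L_w \sqrt{m} \gtrsim \alpha \sqrt{h}$ so that $\log(3K/\epsilon) \geq 1$. In the complementary regime the claim follows trivially from $\cR_m(\cG) \leq \alpha$, which is already absorbed into the asserted $\cO(\cdot)$ bound. I expect this boundary-regime check to be the only nonmechanical step; everything else is a one-step chaining argument with a routine optimization of $\epsilon$.
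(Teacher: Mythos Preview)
Your proposal is correct and matches the paper's argument in all essential respects. Both proofs cover the parameter ball $\{w:\nbr{w}_2\le K\}$ at a single scale, transfer that cover to function space via the $L_w$-Lipschitz property, apply a finite-class Rademacher bound, and then optimize the scale; the paper's optimal $\beta=\alpha\sqrt{h/m}$ is exactly your $L_w\epsilon$ at $\epsilon=\alpha\sqrt{h/m}/L_w$.

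The only cosmetic difference is that the paper routes the argument through Dudley's entropy integral,
\[
\cR_m(\cG)\lesssim \inf_{\beta>0}\;\beta+\frac{1}{\sqrt m}\int_\beta^{\alpha}\sqrt{\log \cN(\cG,\Delta,\delta)}\,d\delta,
\]
and then bounds the integral crudely by $(\alpha-\beta)$ times the integrand at $\delta=\beta$, which collapses to precisely your Massart step. Your direct one-step-chaining presentation is more elementary and loses nothing, since the paper never exploits the multi-scale refinement that Dudley's integral can in principle provide; conversely, the Dudley formulation would matter only if one wanted to sharpen the logarithmic factor, which neither proof attempts. Your boundary-regime remark (when $KL_w\sqrt m\lesssim\alpha\sqrt h$) is a nice addition that the paper leaves implicit.
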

\begin{proof}
	For any $w_1,w_2 \in \RR^h$ and $\cX_{m} = \cbr{x_i}_{i=1}^m$, we consider the matric $\Delta\rbr{g_1,g_2} = \max_{x_i \in \cX_{m}} \abr{g_1(x_i) - g_2(x_i)}$, which satisfies
	\begin{align}
		\Delta\rbr{g_1,g_2} = \max_{x \in \cX_{m}} \abr{g_1(x) - g_2(x)} = \abr{g\rbr{w_1,x} - g\rbr{w_2,x} } \leq L_w \nbr{w_1 - w_2}_2. \label{eqn:metric}
	\end{align}
	Since $g$ is a parametric function with $h$ parameters, then we have the covering number of $\cG$ under the metric $\Delta$ in \eqref{eqn:metric} satisfies
	\begin{align*}
		\cN \rbr{\cG, \Delta, \delta } \leq \rbr{\frac{3KL_w}{\delta}}^h.
	\end{align*}
	Then using the standard Dudley's entropy integral bound on the ERC \cite{mohri2012foundations}, we have the ERC satisfies
	\begin{align}
		\cR_m \rbr{\cG} &\lesssim \inf_{\beta > 0} ~\beta + \frac{1}{\sqrt{m}} \int_{\beta}^{\sup_{g \in \cG} \Delta\rbr{g,0}} \sqrt{\log \cN \rbr{\cG, \Delta, \delta } } ~d \delta. \label{eqn_dnn:dudley}
	\end{align}
	Since we have
	\begin{align*}
		\alpha=\sup_{g \in \cG, x \in \cX_{m}} \Delta\rbr{g,0} = \sup_{g \in \cG, x \in \cX_{m}} \abr{g(w,x)}. 
	\end{align*}
	Then we have
	\begin{align*}
		\cR_m \rbr{\cG} &\lesssim \inf_{\beta > 0} ~\beta + \frac{1}{\sqrt{m}} \int_{\beta}^{\alpha} \sqrt{ h \log \frac{K L_w}{\delta} } ~d \delta \\
		&\leq \inf_{\beta > 0} ~\beta + \alpha\sqrt{\frac{h \log \frac{KL_w}{\beta}}{m}} \overset{(i)}{\lesssim} \frac{\alpha \sqrt{h \log \frac{KL_w \sqrt{m}}{\alpha\sqrt{h}}}}{\sqrt{m}},
	\end{align*}
	where $(i)$ is obtained by taking $\beta = \alpha \sqrt{h/m}$.
\end{proof}

By definition, we have $\alpha = \sup_{f \in \cF_{D,\Jac}, x \in \cX_{m} } g_{\gamma} \rbr{f\rbr{\cW_{D},x}}$. From $\frac{1}{\gamma}$-Lipschitz continuity of $g$ and bounded Jacobian, we also have
\begin{align}
	\alpha \leq \frac{R \cdot B_{1:D}^{\Jac}}{\gamma}. \label{eqn:alpha}
\end{align}

From Lemma~\ref{lem:lipschitz_para}, we have
\begin{align*}
	L_w \leq \frac{\max_{x \in \cX_{m}} B^{\Jac}_{\backslash d} \cdot R \sqrt{2D} \cdot \max_{d} \nbr{W_d}_2^{1/2}}{\gamma}.
\end{align*}
Moreover, when $p_d=p$ for all $d \in [D]$, we have
\begin{align*}
	K = \sqrt{\sum_{d=1}^{D} \nbr{W_d}_{\Fr}^2 } \leq \sqrt{p D} \cdot \max_{d} \nbr{W_d}_2.
\end{align*}
Combining the results above with Lemma~\ref{lem:cover_lip} and $h=2 D pr$, we have
\begin{align*}
	\cR_m \rbr{\cG} \lesssim \frac{\alpha \sqrt{h \log \frac{KL_w \sqrt{m}}{\alpha\sqrt{h}}}}{\sqrt{m}} \lesssim \frac{R \cdot B_{1:D}^{\Jac} \sqrt{D pr \log \frac{B^{\Jac}_{\backslash d} \cdot R \sqrt{Dm/r} \cdot \max_{d} \nbr{W_d}_2/\gamma}{\sup_{f \in \cF_{D,\Jac}, x \in \cX_{m} } g_{\gamma} \rbr{f\rbr{\cW_{D},x}}} }}{\gamma \sqrt{m}}.
\end{align*}

\section{Proof of Corollary~\ref{cor:norm_ind}}\label{pf:cor:norm_ind}

The analysis follows Theorem~\ref{thm:tighter_upperbd}, except that the bound for $\alpha$ in \eqref{eqn:alpha} satisfies
\begin{align*}
	\alpha \leq \min \cbr{b, \frac{R \cdot B_{1:D}^{\Jac}}{\gamma}},
\end{align*}
since $g$ satisfies $\abr{g} \leq b$ and $\frac{1}{\gamma}$-Lipschitz continuous. Then we have the desired result.

\section{Proof of Corollary~\ref{cor:cnn_bd}}\label{pf:cor:cnn_bd}

We first show that using unit norm filters for all $d \in [D]$ and $n_{d} \leq k_{d}$, we have
\begin{align}
	\nbr{W_{d}}_2 = \sqrt{\frac{k_{d}}{s_{d}}}, \label{eqn_dnn:cnn_bd}
\end{align}

First note that when $n_{d} = k_{d}$, due to the orthogonality of $\cbr{ w^{(d,j)} }_{j=1}^{k_{d}}$, for all $i,q \in [k_{d}]$, $i \neq q$, we have
\begin{align}
	\sum_{j=1}^{k_{d}} \rbr{w^{(d,j)}_{i}}^2 = 1 ~~\text{and}~~\sum_{j=1}^{k_{d}} w^{(d,j)}_{q} \cdot w^{(d,j)}_{i} = 0. \label{eqn_dnn:cnn_int1}
\end{align}

When $n_{d} = k_{d}$, we have for all $i \in [p_{d-1}]$, the diagonal entries of $W_{d}^{\top}W_{d}$ satisfy
\begin{align}
	\rbr{W_{d}^{\top}W_{d}}_{ii} = \sum_{j=1}^{k_{d}} \nbr{\rbr{W_{d}^{(j)}}_{*i}}_2 = \sum_{j=1}^{k_{d}} \sum_{h=1}^{\frac{k_{d}}{s_{d}}} \rbr{w^{(d,j)}_{\rbr{i\%s_{d}} + \rbr{h-1}s_{d} } }^2 \overset{(i)}{=} \frac{k_{d}}{s_{d}}. \label{eqn_dnn:cnn_diag}
\end{align}
where $(i)$ is from \eqref{eqn_dnn:cnn_int1}. For the off-diagonal entries of $W_{d}^{\top}W_{d}$, i.e., for $i \neq q$, $i,q \in [p_{d}]$, we have
\begin{align}
	\rbr{W_{d}^{\top}W_{d}}_{iq} &= \sum_{j=1}^{k_{d}} \rbr{W_{d}^{(j)}}_{*q}^{\top} \rbr{W_{d}^{(j)}}_{*i} \nonumber \\
	&= \sum_{j=1}^{k_{d}} \sum_{h=1}^{\frac{k_{d}}{s_{d}}} w^{(d,j)}_{\rbr{i\%s_{d}} + \rbr{h-1}s_{d} } \cdot w^{(d,j)}_{\rbr{q\%s_{d}} + \rbr{h-1}s_{d} } \overset{(i)}{=} 0, \label{eqn_dnn:cnn_offdiag}
\end{align}
where $(i)$ is from \eqref{eqn_dnn:cnn_int1}. Combining \eqref{eqn_dnn:cnn_diag} and \eqref{eqn_dnn:cnn_offdiag}, we have that $W_{d}^{\top}W_{d}$ is a diagonal matrix with
\begin{align*}
	\nbr{W_{d}^{\top}W_{d} }_2 = \frac{k_{d}}{s_{d}} ~~\Longrightarrow~~ \nbr{W_{d} }_2 = \sqrt{\frac{k_{d}}{s_{d}}}.
\end{align*}
For $n_{d} < n_{k}$, we have that $W_{d}$ is a row-wise submatrix of that when $n_{d} = k_{d}$, denoted as $\tilde{W}_{d}$. Let $S \in \RR^{\frac{n_{d}k_{d}}{s_{d}} \times p_{d}}$ be a row-wise submatrix of an identity matrix corresponding to sampling the row of $W_{d}$ to form $\tilde{W}_{d}$. Then we have that \eqref{eqn_dnn:cnn_bd} holds, and since
\begin{align*}
	\nbr{\tilde{W}_{d}}_2 = \sqrt{\nbr{S \cdot W_{d} W_{d}^{\top} \cdot S^{\top} }_2^2 } =  \sqrt{\frac{k_{d}}{s_{d}}}.
\end{align*}

Suppose $k_1 = \cdots = k_D = k$ for ease of discussion. Then following the same argument as in the proof of Theorem~\ref{thm:tighter_upperbd} and Lemma~\ref{lem:cover_lip}, we have 
\begin{align*}
	\alpha &= \sup_{f \in \cF_{D,\Jac}, x \in \cX_{m} } g_{\gamma} \rbr{f\rbr{\cW_{D},x}} \leq \frac{R \cdot \prod_{d=1}^D \nbr{W_d}_2}{\gamma} = \frac{R \cdot\prod_{d=1}^D \sqrt{\frac{k}{s_{d}}} }{\gamma}, \\
	L_w &\leq \max_{x \in \cX_{m}} B^{\Jac, x}_{\backslash d} \cdot R \sqrt{2Dk/s}, \\
	K &= \sqrt{\sum_{d=1}^{D} \sum_{j=1}^{n_d} \nbr{ w^{(d,j)} }_2^2 } = \sqrt{\sum_{d=1}^D n_{d}}, ~\text{and}~\\
	h &= k \sum_{d=1}^D n_{d}.
\end{align*}

Using the fact that the number of parameters in each layer is no more than $k n_{d}$ rather than $2pr$, we have
\begin{align*}
	\cR_m \rbr{\cG} &\lesssim \frac{\alpha \sqrt{h \log \frac{KL_w \sqrt{m}}{\alpha\sqrt{h}}}}{\sqrt{m}} \\
	&\lesssim \frac{R \cdot \prod_{d=1}^D \sqrt{\frac{k}{s_{d}}} \cdot \sqrt{k \sum_{d=1}^D n_{d} \log \frac{B^{\Jac}_{\backslash d} \cdot R \sqrt{Dm/s}/\gamma}{\sup_{f \in \cF_{D,\Jac}, x \in \cX_{m} } g_{\gamma} \rbr{f\rbr{\cW_{D},x}}} }}{\gamma \sqrt{m}}.
\end{align*}
We finish the proof by combining with Corollary~\ref{cor:norm_ind}.

\section{Proof of Corollary~\ref{cor:upperbd_resnet_indep}}\label{pf:cor:upperbd_resnet_indep}

The analysis is analogous to the proof for Theorem~\ref{thm:tighter_upperbd}, but with different construction of the intermediate results. 
We first provide an upper bound for the Lipschitz constant of $f\rbr{\cV_{D},\cU_{D},x}$ in terms of $\cV_{D}$ and $\cU_{D}$.

\begin{lemma}\label{lem:lipschitz_res_para}
	Given $x \in \RR^{p_0}$ with $\norm{x}_2 \leq R$, any $f\rbr{\cV_{D},\cU_{D},x},f\rbr{\tilde{\cV}_{D},\tilde{\cU}_{D},x} \in \cF_{D,\Jac}$ with $\cV_{D} = \cbr{V_d}_{d=1}^D$, $\cU_{D} = \cbr{U_d}_{d=1}^D$ ,$\tilde{\cV}_{D} = \cbr{\tilde{V}_d}_{d=1}^D$, and $\tilde{\cU}_{D} = \cbr{\tilde{U}_d}_{d=1}^D$, and denote $B^{\Jac, x}_{\backslash d} = \max_{d \in [D]} B^{\Jac, x}_{1:(d-1)} B^{\Jac, x}_{(d+1):D}$, then we have
	\begin{align*}
		&\nbr{f\rbr{\cV_{D},\cU_{D},x} - f\rbr{\tilde{V}_{D}, \tilde{U}_{D},x}}_2 \\
		&\leq B^{\Jac, x}_{\backslash d} \max_{d} \rbr{B_{V_d,2} + B_{U_d,2}} R\sqrt{2D} \cdot \sqrt{\sum_{d=1}^{D} \nbr{V_{d} - \tilde{V}_{d}}_{\rm F}^2 + \nbr{U_{d} - \tilde{U}_{d}}_{\rm F}^2 }.
	\end{align*}
\end{lemma}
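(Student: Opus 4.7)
The strategy is to adapt the layer-wise telescoping argument of Lemma~\ref{lem:lipschitz_para} to the ResNet setting. I would write $f(\cV_D,\cU_D,x) - f(\tilde\cV_D,\tilde\cU_D,x)$ as a telescoping sum of $D$ terms, where the $d$-th term compares two ResNets that agree on the first $d-1$ layers (using $\tilde V, \tilde U$) and on the last $D-d$ layers (using $V, U$), differing only at layer $d$. Writing $y_{d-1} := f_{\tilde V_{d-1},\tilde U_{d-1}}(\cdots f_{\tilde V_1,\tilde U_1}(x))$ for the common input to layer $d$, the $d$-th telescoping term equals an expression of the form $H_d(f_{V_d,U_d}(y_{d-1})) - H_d(f_{\tilde V_d,\tilde U_d}(y_{d-1}))$ for some composition $H_d$ of the upper layers. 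Since $H_d$ is piecewise linear with fixed activation patterns on each piece, it acts on the perturbation as the ResNet Jacobian $J_{(d+1):D}^x$, so this term is bounded by $B^{\Jac,x}_{(d+1):D}\cdot \|f_{V_d,U_d}(y_{d-1}) - f_{\tilde V_d,\tilde U_d}(y_{d-1})\|_2$.

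Next, I would bound the single-layer discrepancy. Because both expressions in layer $d$ share the same input $y_{d-1}$, the skip connection $+y_{d-1}$ appears inside both ReLU's and cancels when we apply $1$-Lipschitzness of $\sigma$: $\|f_{V_d,U_d}(y_{d-1}) - f_{\tilde V_d,\tilde U_d}(y_{d-1})\|_2 \leq \|V_d\sigma(U_d y_{d-1}) - \tilde V_d \sigma(\tilde U_d y_{d-1})\|_2$. A standard add-and-subtract of $V_d\sigma(\tilde U_d y_{d-1})$, together with $1$-Lipschitzness of $\sigma$ and submultiplicativity of the spectral norm, gives
\begin{align*}
\|V_d\sigma(U_d y_{d-1}) - \tilde V_d \sigma(\tilde U_d y_{d-1})\|_2
&\leq \|V_d\|_2\|U_d - \tilde U_d\|_2\|y_{d-1}\|_2 + \|\tilde U_d\|_2 \|V_d - \tilde V_d\|_2 \|y_{d-1}\|_2\\
&\leq \bigl(B_{V_d,2}\|U_d-\tilde U_d\|_{\Fr} + B_{U_d,2}\|V_d-\tilde V_d\|_{\Fr}\bigr)\|y_{d-1}\|_2,
\end{align*}
upgrading spectral to Frobenius norms on the perturbations. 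Finally, $\|y_{d-1}\|_2 = \|J_{1:(d-1)}^x x\|_2 \leq B^{\Jac,x}_{1:(d-1)} R$ by the same Jacobian-bound argument applied to the prefix network.

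Combining the three pieces, each telescoping term is at most $B^{\Jac,x}_{\backslash d}\cdot R\cdot \max_d(B_{V_d,2}+B_{U_d,2}) \cdot (\|V_d-\tilde V_d\|_{\Fr} + \|U_d-\tilde U_d\|_{\Fr})$. Summing over $d\in[D]$ and applying Cauchy--Schwarz to pass from $\sum_d(\|V_d-\tilde V_d\|_{\Fr}+\|U_d-\tilde U_d\|_{\Fr})$ to $\sqrt{2D}\sqrt{\sum_d \|V_d-\tilde V_d\|_{\Fr}^2+\|U_d-\tilde U_d\|_{\Fr}^2}$ delivers the claimed bound.

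The main obstacle I anticipate is verifying that the Jacobian bound $B^{\Jac,x}_{(d+1):D}$ genuinely controls the propagation of the layer-$d$ perturbation \emph{through the skip connections} of the upper sub-network. Because a ResNet with ReLU is piecewise linear in its input for each fixed activation pattern, and the Jacobian $J_{(d+1):D}^x$ in \eqref{eqn_dnn:resnet_Dl} is defined as the supremum of this linear operator's spectral norm over the sub-network's parameters, this is essentially automatic; but care is needed to argue that holding $\cV,\cU$ fixed while varying the input keeps the hybrid composition within the class $\cF^{\RN}_{D,\Jac}$, so that the Jacobian supremum applies. The rest is bookkeeping.
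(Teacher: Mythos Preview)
Your proposal is correct and follows essentially the same approach as the paper: a layer-wise telescoping sum, the Jacobian bound $B^{\Jac,x}_{(d+1):D}$ for the suffix network, cancellation of the skip connection via $1$-Lipschitzness of $\sigma$, an add-and-subtract to separate the $U_d$- and $V_d$-perturbations, the prefix Jacobian bound for $\|y_{d-1}\|_2$, and Cauchy--Schwarz for the $\sqrt{2D}$ factor. The only cosmetic difference is the choice of intermediate term---the paper inserts $f_{\tilde V_d,U_d}(y_{d-1})$ (equivalently $\tilde V_d\sigma(U_d y_{d-1})$) whereas you insert $V_d\sigma(\tilde U_d y_{d-1})$---which swaps which of $\|V_d\|_2,\|\tilde V_d\|_2$ and $\|U_d\|_2,\|\tilde U_d\|_2$ appears, but both are controlled by $B_{V_d,2}$ and $B_{U_d,2}$ so the final bound is identical.
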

\begin{proof}
	Given $x$ and two sets of weight matrices $\cbr{U_{d}, V_{d}}_{d=1}^D$, $\cbr{\tilde{U}_{d},\tilde{U}_{d}}_{d=1}^D$, we have
	\begin{align}
		&\nbr{f_{V_{D}, U_{D}} \rbr{f_{V_{D-1}, U_{D-1}}\rbr{ \cdots f_{V_{1}, U_{1}} \rbr{x} } } - f_{\tilde{V}_{D}, \tilde{U}_{D}} \rbr{f_{\tilde{V}_{D-1}, \tilde{U}_{D-1}} \rbr{\cdots f_{\tilde{V}_{1}, \tilde{U}_{1}}\rbr{x} } } }_2 \nonumber \\
		&\leq \sum_{d=1}^{D} \nbr{ f_{V_{D}, U_{D}} \rbr{\cdots f_{V_{d+1}, U_{d+1}}\rbr{ f_{\tilde{V}_{d}, \tilde{U}_{d}} \rbr{ \cdots  } } } - f_{V_{D}, U_{D}} \rbr{\cdots f_{V_{d+1}, U_{d+1}}\rbr{ f_{\tilde{V}_{d}, U_{d}} \rbr{ \cdots  } } } }_2 \nonumber \\
		&\hspace{0.2in} + \sum_{d=1}^{D} \nbr{ f_{V_{D}, U_{D}} \rbr{\cdots f_{V_{d+1}, U_{d+1}}\rbr{ f_{\tilde{V}_{d}, U_{d}} \rbr{ \cdots  } } } - f_{V_{D}, U_{D}} \rbr{\cdots f_{V_{d}, U_{d}}\rbr{ f_{\tilde{V}_{d-1}, \tilde{U}_{d-1}} \rbr{ \cdots } } } }_2 \nonumber \\
		&= \sum_{d=1}^{D} \nbr{ J_{(d+1):D}^x \cdot f_{\tilde{V}_{d}, \tilde{U}_{d}} \rbr{ f_{\tilde{V}_{d-1}, \tilde{U}_{d-1}} \rbr{ \cdots } } - J_{(d+1):D}^x \cdot f_{\tilde{V}_{d}, U_{d}} \rbr{ f_{\tilde{V}_{d-1}, \tilde{U}_{d-1}} \rbr{ \cdots } } }_2 \nonumber \\
		&\hspace{0.5in} + \sum_{d=1}^{D} \nbr{ J_{(d+1):D}^x \cdot f_{\tilde{V}_{d}, U_{d}} \rbr{ f_{\tilde{V}_{d-1}, \tilde{U}_{d-1}} \rbr{ \cdots } } - J_{(d+1):D}^x \cdot f_{V_{d}, U_{d}}\rbr{ f_{\tilde{V}_{d-1}, \tilde{U}_{d-1}} \rbr{ \cdots } } }_2 \nonumber \\
		&\leq \sum_{d=1}^{D} B^{\Jac, x}_{(d+1):D} \cdot \nbr{ f_{\tilde{V}_{d}, \tilde{U}_{d}} \rbr{ f_{\tilde{V}_{d-1}, \tilde{U}_{d-1}} \rbr{ \cdots } } - f_{\tilde{V}_{d}, U_{d}} \rbr{ f_{\tilde{V}_{d-1}, \tilde{U}_{d-1}} \rbr{ \cdots } } }_2 \nonumber \\
		&\hspace{0.5in} + \sum_{d=1}^{D} B^{\Jac, x}_{(d+1):D} \cdot \nbr{ f_{\tilde{V}_{d}, U_{d}} \rbr{ f_{\tilde{V}_{d-1}, \tilde{U}_{d-1}} \rbr{ \cdots } } - f_{V_{d}, U_{d}}\rbr{ f_{\tilde{V}_{d-1}, \tilde{U}_{d-1}} \rbr{ \cdots } } }_2 \nonumber \\
		&\overset{(i)}{\leq} \sum_{d=1}^{D} B^{\Jac, x}_{(d+1):D} \cdot \nbr{ \tilde{V}_{d} \sigma\rbr{\tilde{U}_{d} \cdot {f_{\tilde{V}_{d-1}, \tilde{U}_{d-1}}\rbr{ \cdots } } } - \tilde{V}_{d} \sigma\rbr{U_{d} \cdot {f_{\tilde{V}_{d-1}, \tilde{U}_{d-1}}\rbr{ \cdots } } } }_2 \nonumber \\
		&\hspace{0.5in} + \sum_{d=1}^{D} B^{\Jac, x}_{(d+1):D} \cdot \nbr{ \tilde{V}_{d} \sigma\rbr{U_{d} \cdot {f_{\tilde{V}_{d-1}, \tilde{U}_{d-1}}\rbr{ \cdots } } } - V_{d} \sigma\rbr{U_{d} \cdot {f_{\tilde{V}_{d-1}, \tilde{U}_{d-1}}\rbr{ \cdots } } } }_2 \nonumber \\
		&\overset{(ii)}{\leq} \sum_{d=1}^{D} B^{\Jac, x}_{(d+1):D} \cdot \rbr{\nbr{U_{d} - \tilde{U}_{d} }_2 \nbr{V_{d}}_2 + \nbr{V_{d} - \tilde{V}_{d} }_2 \nbr{U_{d}}_2 } \nbr{ f_{\tilde{V}_{d-1}, \tilde{U}_{d-1}}\rbr{ \cdots } }_2, \label{eqn:lip_res_vu1}
	\end{align}
	where we choose same activations from $d$-th to $D$-th layer in $f_{V_{D}, U_{D}} \rbr{\cdots f_{V_{d+1}, U_{d+1}}\rbr{ f_{\tilde{V}_{d}, U_{d}} \rbr{ \cdots  } } }$ with $f_{V_{D}, U_{D}} \rbr{ \cdots f_{V_{1}, U_{1}} \rbr{x} }$ and same activations from $1$-st to $d-1$-th layer with $f_{\tilde{V}_{D}, \tilde{U}_{D}} \rbr{\cdots f_{\tilde{V}_{1}, \tilde{U}_{1}}\rbr{x} }$,  $(i)$ and $(ii)$ from the entry-wise $1$--Lipschitz continuity of $\sigma(\cdot)$. In addition, for any $d \in [D]$, we further have
	\begin{align}
		\nbr{ f_{\tilde{V}_{d-1}, \tilde{U}_{d-1}}\rbr{ \cdots f_{\tilde{V}_{1}, \tilde{U}_{1}} \rbr{x} } }_2 &= \nbr{ J_{1:d}^x \cdot x }_2 \leq B^{\Jac, x}_{1:(d-1)} \cdot \nbr{x}_2. \label{eqn:lip_res_vu2}
	\end{align}
	
	Combining \eqref{eqn:lip_res_vu1} and \eqref{eqn:lip_res_vu2}, we obtain 
	\begin{align*}
		&\nbr{f_{V_{D}, U_{D}} \rbr{ \cdots f_{V_{1}, U_{1}} \rbr{x} } - f_{\tilde{V}_{D}, \tilde{U}_{D}} \rbr{\cdots f_{\tilde{V}_{1}, \tilde{U}_{1}}\rbr{x} } }_{2} \\
		&\leq \sum_{d=1}^{D} B^{\Jac, x}_{(d+1):D} \cdot B^{\Jac, x}_{1:(d-1)} \cdot \nbr{x}_2 \cdot \rbr{ \nbr{V_{d} - \tilde{V}_{d}}_{\rm F} \cdot \nbr{U_{d}}_2 + \nbr{U_{d} - \tilde{U}_{d}}_{\rm F} \cdot \nbr{\tilde{V}_{d}}_2 } \\
		&\leq B^{\Jac, x}_{\backslash d} \max_{d} \rbr{\nbr{V_d}_2 + \nbr{U_d}_2} R \sum_{d=1}^{D} \cdot \rbr{ \nbr{V_{d} - \tilde{V}_{d}}_{\rm F} + \nbr{U_{d} - \tilde{U}_{d}}_{\rm F} } \\
		&\leq B^{\Jac, x}_{\backslash d} \max_{d} \rbr{\nbr{V_d}_2 + \nbr{U_d}_2} R\sqrt{2D} \cdot \sqrt{\sum_{d=1}^{D} \nbr{V_{D} - \tilde{V}_{D}}_{\rm F}^2 + \sum_{d=1}^{D} \nbr{U_{D} - \tilde{U}_{D}}_{\rm F}^2 }.
	\end{align*}
\end{proof}


Let $p_1=\cdots=p_D=p$ and $q_1=\cdots=q_D=q$. Then following the same argument as in the proof of Theorem~\ref{thm:tighter_upperbd}, we have 
\begin{align*}
	\alpha &= \sup_{f \in \cF_{D,\Jac}, x \in \cX_{m} } g_{\gamma} \rbr{f\rbr{\cV_{D},\cV_{D},x}} \leq \frac{R \cdot B_{1:D}^{\Jac}}{\gamma}, \\
	L_w &\leq \max_{x \in \cX_{m}} B^{\Jac, x}_{\backslash d} \max_{d} \rbr{\nbr{V_d}_2 + \nbr{U_d}_2} R\sqrt{2D}, \\
	K &= \sqrt{\sum_{d=1}^{D} \nbr{ V_{d} }_F^2 + \nbr{U_{d} }_F^2} \leq \sqrt{p D} \cdot \max_{d} \rbr{\nbr{V_d}_2 + \nbr{U_d}_2}, ~\text{and}~h = 2 D pq,
\end{align*}

Combining Lemma~\ref{lem:cover_lip} and Lemma~\ref{lem:lipschitz_res_para}, we have
\begin{align*}
	&\cR_m \rbr{\cG} \lesssim \frac{\alpha \sqrt{h \log \frac{KL_w \sqrt{m}}{\alpha\sqrt{h}}}}{\sqrt{m}} \\
	&\lesssim \frac{R \cdot B_{1:D}^{\Jac} \cdot \sqrt{D pq \cdot \log \rbr{ \frac{B^{\Jac}_{\backslash d} \max_{d} \rbr{\nbr{V_d}_2 + \nbr{U_d}_2} R\sqrt{m/q}/\gamma }{\sup_{f \in \cF_{D,\Jac}, x \in \cX_{m} } g_{\gamma} \rbr{f\rbr{\cV_{D},\cV_{D},x}} } } } }{\gamma \sqrt{m} }.
\end{align*}
We finish the proof by combining with Corollary~\ref{cor:norm_ind}.

\section{Spectral Bound for $W_{d}$ in CNNs with Matrix Filters}\label{apx:mat_filter}

We provide further discussion on the upper bound of the spectral norm for the weight matrix $W_{d}$ in CNNs with matrix filters. In particular, by denoting $W_{d}$ using submatrices as in \eqref{eqn_dnn:cnn_weight}, i.e.,
\begin{align*}
	W_{d} = \sbr{W_{d}^{(1)\top}~\cdots~W_{d}^{(n_{d})\top}}^\top \in \RR^{p_{d} \times p_{d-1}},
\end{align*}
we have that each block matrix $W_{d}^{(j)}$ is of the form
\begin{align}
	W_{d}^{(j)} = \sbr{\begin{array}{cccc}
			W_{d}^{(j)}\rbr{1,1} & W_{d}^{(j)}\rbr{1,2} & \cdots & W_{d}^{(j)}\rbr{1,\sqrt{p_{d-1}}} \\
			W_{d}^{(j)}\rbr{2,1} & W_{d}^{(j)}\rbr{2,2} & \cdots & W_{d}^{(j)}\rbr{2,\sqrt{p_{d-1}}} \\
			\vdots & \vdots & \ddots & \vdots \\
			W_{d}^{(j)}\rbr{\frac{\sqrt{p_{d-1} k_{d}}}{s_{d}},1} & W_{d}^{(j)}\rbr{\frac{\sqrt{p_{d-1} k_{d}}}{s_{d}},2} & \cdots & W_{d}^{(j)}\rbr{\frac{\sqrt{p_{d-1} k_{d}}}{s_{d}},\sqrt{p_{d-1}}} \\
	\end{array} }, \label{eqn:block_W}
\end{align}
where $W_{d}^{(j)}\rbr{i,l} \in \RR^{\frac{\sqrt{p_{d-1} k_{d}}}{s_{d}} \times \sqrt{p_{d-1}}}$ for all $i \in \sbr{\frac{\sqrt{p_{d-1} k_{d}}}{s_{d}}}$ and $l \in \sbr{\sqrt{p_{d-1}}}$. Particularly, off-diagonal blocks are zero matrices, i.e., $W_{d}^{(j)}\rbr{i,l} = 0$ for $i \neq l$. For diagonal blocks, we have
\begin{align}
	W_{d}^{(j)} \rbr{i,i}= \sbr{\begin{array}{c}
			\underbrace{w^{(j,1)}}_{\in \RR^{\sqrt{k_{d}}}} \hspace{-0.0in}\underbrace{0 \cdots\cdots\cdots\cdots 0}_{\in \RR^{\sqrt{\frac{p_{d-1}}{k_{d}}}-\sqrt{k_{d}}}}\hspace{-0.0in} \cdots\cdots\cdots\cdots\cdots
			\underbrace{w^{(j,\sqrt{k_{d}})}}_{\in \RR^{\sqrt{k_{d}}}} \hspace{-0.0in}\underbrace{0 \cdots\cdots\cdots\cdots\cdots\cdots\cdots 0}_{\in \RR^{\sqrt{\frac{p_{d-1}}{k_{d}}}-\sqrt{k_{d}}}} \\
			\underbrace{0\cdots 0}_{\in \RR^{\frac{s_{d}}{\sqrt{k_{d}}}}} \underbrace{w^{(j,1)}}_{\in \RR^{\sqrt{k_{d}}}} \hspace{-0.0in}\underbrace{0 \cdots\cdots\cdots\cdots 0}_{\in \RR^{\sqrt{\frac{p_{d-1}}{k_{d}}}-\sqrt{k_{d}}}}\hspace{-0.0in} \cdots\cdots\cdots\cdots\cdots
			\underbrace{w^{(j,\sqrt{k_{d}})}}_{\in \RR^{\sqrt{k_{d}}}} \hspace{-0.0in}\underbrace{0 \cdots\cdots\cdots\cdots\cdot\cdot 0}_{\hspace{0.0in}\in \RR^{\sqrt{\frac{p_{d-1}}{k_{d}}}-\sqrt{k_{d}}-\frac{s_{d}}{\sqrt{k_{d}}}}} \\
			\vdots \\
			w^{(j,1)}_{\cbr{\frac{s_{d}}{\sqrt{k_{d}}}}} \hspace{-0.0in}\underbrace{0 \cdots\cdots\cdots\cdots 0}_{\in \RR^{\sqrt{\frac{p_{d-1}}{k_{d}}}-\sqrt{k_{d}}}}\hspace{-0.0in} \cdots\cdots\cdots
			\underbrace{w^{(j,\sqrt{k_{d}})}}_{\in \RR^{\sqrt{k_{d}}}} \hspace{-0.0in}\underbrace{0 \cdots\cdots\cdots\cdots\cdots\cdots\cdot\cdot 0}_{\in \RR^{\sqrt{\frac{p_{d-1}}{k_{d}}}-\sqrt{k_{d}}}} w^{(j,1)}_{\cbr{\frac{s_{d}}{1}}}
	\end{array} }.\label{eqn:block_W_diag}
\end{align}
where $w^{(j,1)}_{\cbr{\frac{s_{d}}{1}}} = w^{(j,1)}_{1:\frac{s_{d}}{\sqrt{k_{d}}}}  \in \RR^{\frac{s_{d}}{\sqrt{k_{d}}}}$ and $w^{(j,1)}_{\cbr{\frac{s_{d}}{\sqrt{k_{d}}}}} = w^{(j,1)}_{\rbr{\sqrt{k_{d}} - \frac{s_{d}}{\sqrt{k_{d}}} + 1}: \sqrt{k_{d}}} \in \RR^{\frac{s_{d}}{\sqrt{k_{d}}}}$. Combining \eqref{eqn:block_W} and \eqref{eqn:block_W_diag}, we have that the stride for $W_{d}^{(j)}$ is $\frac{s_{d}^2}{k_{d}}$. Using the same analysis for Corollary~\ref{cor:cnn_bd}. We have $\nbr{W_{d}}_2 = 1$ if $\sqrt{\sum_{i}\nbr{w^{(j,i)}}_2^2} = \frac{k_{d}}{s_{d}}$.

For image inputs, we need an even smaller matrix $W_{d}^{(j)} \rbr{i,i}$ with fewer rows than \eqref{eqn:block_W_diag}, denoted as
\begin{align}
	W_{d}^{(j)} \rbr{i,i}= \sbr{\begin{array}{c}
			\underbrace{w^{(j,1)}}_{\in \RR^{\sqrt{k_{d}}}} \hspace{-0.0in}\underbrace{0 \cdots\cdots\cdots\cdots 0}_{\in \RR^{\sqrt{\frac{p_{d-1}}{k_{d}}}-\sqrt{k_{d}}}}\hspace{-0.0in} \cdots\cdots\cdots\cdots\cdots
			\underbrace{w^{(j,\sqrt{k_{d}})}}_{\in \RR^{\sqrt{k_{d}}}} \hspace{-0.0in}\underbrace{0 \cdots\cdots\cdots\cdots\cdots\cdots\cdots 0}_{\in \RR^{\sqrt{\frac{p_{d-1}}{k_{d}}}-\sqrt{k_{d}}}} \\
			\underbrace{0\cdots 0}_{\in \RR^{\frac{s_{d}}{\sqrt{k_{d}}}}} \underbrace{w^{(j,1)}}_{\in \RR^{\sqrt{k_{d}}}} \hspace{-0.0in}\underbrace{0 \cdots\cdots\cdots\cdots 0}_{\in \RR^{\sqrt{\frac{p_{d-1}}{k_{d}}}-\sqrt{k_{d}}}}\hspace{-0.0in} \cdots\cdots\cdots\cdots\cdots
			\underbrace{w^{(j,\sqrt{k_{d}})}}_{\in \RR^{\sqrt{k_{d}}}} \hspace{-0.0in}\underbrace{0 \cdots\cdots\cdots\cdots\cdot\cdot 0}_{\hspace{0.0in}\in \RR^{\sqrt{\frac{p_{d-1}}{k_{d}}}-\sqrt{k_{d}}-\frac{s_{d}}{\sqrt{k_{d}}}}} \\
			\vdots \\
			\hspace{-0.0in}\underbrace{0 \cdots\cdots\cdots\cdots\cdots\cdots 0}_{\in \RR^{\sqrt{\frac{p_{d-1}}{k_{d}}}-\sqrt{k_{d}}}} \underbrace{w^{(j,1)}}_{\in \RR^{\sqrt{k_{d}}}} \hspace{-0.0in}\underbrace{0 \cdots\cdots\cdots\cdots\cdots 0}_{\in \RR^{\sqrt{\frac{p_{d-1}}{k_{d}}}-\sqrt{k_{d}}}}\hspace{-0.0in} \cdots\cdots\cdots\cdots\cdots
			\underbrace{w^{(j,\sqrt{k_{d}})}}_{\in \RR^{\sqrt{k_{d}}}}
	\end{array} }.\label{eqn:block_W_diag2}
\end{align}
Then $\nbr{W_{d}}_2 \leq 1$ still holds if $\sqrt{\sum_{i}\nbr{w^{(j,i)}}_2^2} = \frac{k_{d}}{s_{d}}$ since $W_{d}$ generated using \eqref{eqn:block_W_diag2} is a submatrix of $W_{d}$ generated using \eqref{eqn:block_W_diag}.

\end{document}